\title{\LARGE\bf Target Defense against Sequentially Arriving Intruders}
\author{Arman Pourghorban, Michael Dorothy, Daigo Shishika, Alexander Von Moll, and Dipankar Maity
\thanks{A.~Pourghorban and D. Maity  are with the Department of Electrical and Computer Engineering, at The University of North Carolina at Charlotte, NC, 28223, USA. 
Emails: 
{\tt apourgho@uncc.edu, dmaity@uncc.edu}
}%
\thanks{
M. Dorothy is with the Computational and Information Sciences Directorate, Army Research Laboratory, APG, MD, 20783, USA.
Email: 
{\tt michael.r.dorothy.civ@mail.mil}
}
\thanks{
D. Shishika is with the Mechanical Engineering Department, at George Mason University,  Fairfax, VA, 22030,  USA.
Email:
{\tt dshishik@gmu.edu}
}
\thanks{
A. Von Moll is with Control Science Center, Air Force Research Laboratory, 
WPAFB, OH, 45433, USA.
Email: {\tt alexander.von\_moll@us.af.mil}
}
\thanks{
This research is supported by the ARL grant ARL DCIST CRA W911NF-17-2-0181.
The views expressed in this paper are those of the
authors and do not reflect the official policy or position of
the United States Air Force, United States
Army,  Department of Defense, United States Government, or
its components.
}
}
\newcommand{\re}{\color{red}}
\DeclareMathOperator{\EX}{\mathbb{E}}
\newcommand{\rt}{\rho_{_T}}
\newcommand{\ra}{\rho_{_A}}
\newcommand{\xd}{\mathbf{x}_{D}}
\newcommand{\xa}{\mathbf{x}_{A}}
\newcommand{\xc}{\mathbf{x}_{C}}
\newcommand{\thetae}{\theta_{\rm eng}}
\newcommand{\taue}{\tau_{\rm eng}}
\newcommand{\phie}{\phi_{\rm eng}}
\newcommand{\reng}{r_{\rm eng}}
\newcommand{\Se}{S_{\rm engage}}
\newcommand{\R}{\mathbb{R}^2}
\newcommand{\uvec}{\hat{\mathbf{u}}}
\newcommand{\x}{\textbf{x}}
\newcommand{\ro}{r_{_T}}
\newtheorem{lemma}{Lemma}
\newtheorem{corr}{Corollary}
\newtheorem{definition}{Definition}
\newtheorem{theorem}{Theorem}
\newtheorem{prop}{Proposition}
\newtheorem{remark}{Remark}
\begin{document}

\maketitle
\thispagestyle{empty}
\pagestyle{empty}
\begin{abstract}
We consider a variant of the target defense problem where a single defender is tasked to capture a sequence of incoming intruders.
The intruders' objective is to breach the target boundary without being captured by the defender. 
As soon as the current intruder breaches the target or gets captured by the defender, the next intruder appears at a random location on a fixed circle surrounding the target. 
Therefore, the defender's final location at the end of the current game becomes its initial location for the next game.
Thus, the players pick strategies that are advantageous for the current as well as for the future games. 
Depending on the information available to the players, each game is divided into two phases: \textit{partial information} and \textit{full information phase}.
Under some assumptions on the sensing and speed capabilities, we analyze the agents' strategies in both phases.
 We derive equilibrium strategies for both the players to optimize the capture percentage using the notions of \textit{engagement surface} and  \textit{capture circle}. 
We quantify the percentage of capture for both finite and infinite sequences of incoming intruders. \\

\end{abstract}

\textbf{\textit{Keywords: }
\small 
Target-defense, Pursuit-evasion, Reach-avoid, Partial information games, Target-Attacker-Defender games.}

\section{Introduction}


Considering the increase of usage and applications of robots in securing and supervising regions, considerable research has been
focused on the problem of guarding a target.
%
%
%
Isaacs \cite{isaacs1999differential} first analyzed a two-player differential game in which the \textit{pursuer} is tasked to chase an \textit{evader}.
Since his pioneering work, several extensions to pursuit-evasion games (PEGs) have been proposed, which incorporate real-world constraints such as obstacles \cite{oyler2016pursuit},  sensing limitations \cite{4660318} and visibility constraints \cite{bhattacharya2009existence}.
Similarly, different variations of PEGs have also been investigated in the literature.
Among these, the Target-Attacker-Defender (TAD) games \cite{liang2019differential,garcia2020optimal} and the reach-avoid games \cite{9091321,chen2014path,9550127} are of particular interest for our work.  
In the TAD game framework, the target and the defender form a team to guard the former from being intercepted by an attacker/intruder; whereas the attacker tries to intercept the target before getting intercepted/captured by the defender.
In reach-avoid games, the attackers attempt to \textit{reach}  certain regions while \textit{avoiding} the defenders that try to prevent them from reaching their designated areas. 
 
Perimeter defence games are a variant of the reach-avoid and TAD games.
In this variation, a group of intruders is tasked to breach a target and the defenders' job is to capture these intruders before they reach the target perimeter.
In some cases, the defenders' motion is constrained on the target perimeter \cite{shishika2020cooperative,9030082} and in other cases they move freely in the environment \cite{9561995}.
The percentage of captured intruders is of particular interest in this type of games, whereas in other variations of PEGs, the capture distance and/or the fuel consumption is of primary interest \cite{sarkar2021finite}. 


In this paper, we consider the problem of guarding a circular target region from incoming intruders. 
All the agents (target, intruders, and defender)  have limited sensing regions. 
Therefore, they may not have information about their opponents all the time.
This sensing limitation enforces them to consider both positional and sensing vantage points while choosing their strategies.
The intruders appear randomly at a fixed distance from the target center (see Fig.~\ref{fig:introPic}).
We consider a sequential arrival of the intruders in the sense that, at any time, exactly one intruder tries to breach the target boundary.
Once the current intruder succeeds or gets captured, the next intruder appears. 
This is a natural extension to the work in \cite{9561995} where the target had to be defended from a \textit{single} intruder instead of a sequence of intruders.
Similar problems have been studied in \cite{adler2022role, bajaj2021competitive, macharet2020adaptive, bajaj2019dynamic} where multiple intruders may attempt to breach the target at the same time.
However, in these works, the strategies for the intruders are simple and \textit{fixed} a priori. 
More specifically, these works assume that the intruders come directly at the target and they do not try to evade while being captured. 
Furthermore, these works do not consider sensing capability for the intruders.
In contrast to these works, we show that the sensing capability results in a particular type of evasive maneuver where the intruders can \textit{force} the defender to pursue them and thus the defender ends up  capturing the intruders at locations which are advantageous for the next intruders to increase their target breaching probability.
 
 Due to the sequential arrival of the intruders, our problem becomes a sequence of one-vs-one games where the next game starts as soon as the current game ends.
We study this problem by decomposing each one-vs-one game into two phases: \textit{partial} and \textit{full information phases}.
Each one-vs-one game starts with the partial information phase and then either it proceeds to the full information phase or the game is terminated in the breach of the target. 
The objective of the defender (intruders) is to maximize (minimize) the percentage  captured intruders.

The main contributions of this paper are: 
(i) Formulating and  solving a sensing limited target defense game against sequentially incoming intruders {{where the intruders are equipped with a limited range sensor through which they can detect the defender from a certain distance and take evasive maneuver}},
(ii) Analyzing the equilibrium strategies for the agents by defining and using the concepts of \textit{engagement surface} and \textit{capture circle} {where the former provides the favourable configurations to engage with an intruder and the latter is the set of all possible capture locations. 
Under the equilibrium strategies, the set of all possible capture locations become a
 a circle with fixed radius and center},
(iii) Analytically computing the capture percentage under both finite and infinite number of intruder arrivals. 
(iv) Numerically validating (using Monte-Carlo type random trials of experiments) the theoretically found capture probability, and finally,
(v) Characterizing the relationship between the capture probability and the problem parameters (e.g., sensing radius, speed ratio etc.) through simulations.

The rest of the paper is organized as follows:
In Section~\ref{sec:ProbFormulation}, we formulate our problem, provide the assumptions for this work, and discuss some useful definitions and necessary background materials. 
The two phases (\textit{partial} and \textit{full information}) of the one-vs-one games are discussed in Sections~\ref{sec:partInfo} and \ref{sec:fullInfo}, respectively. 
We analyze the whole game for the entire sequence of arrivals in Section~\ref{sec:GameAnalysis} and describe the progression of the game in Algorithm~\ref{algo}.
In this section we also derive the capture percentage. 
Numerical results are discussed  in Section~\ref{sec:Simu}.
We conclude the paper in Section~\ref{sec:Conclusion}. \\

\textit{Notation:} 
All vectors are denoted with lowercase bold symbols, e.g., $\mathbf{x}$. 
We use $\uvec(\theta)$ to denote the unit vector $[\cos\theta,~\sin\theta]^\intercal$.

\section{Problem Formulation} \label{sec:ProbFormulation}
\begin{figure}
    \centering
    \includegraphics[trim = 100 30 100 20, clip, width = 0.25 \textwidth]{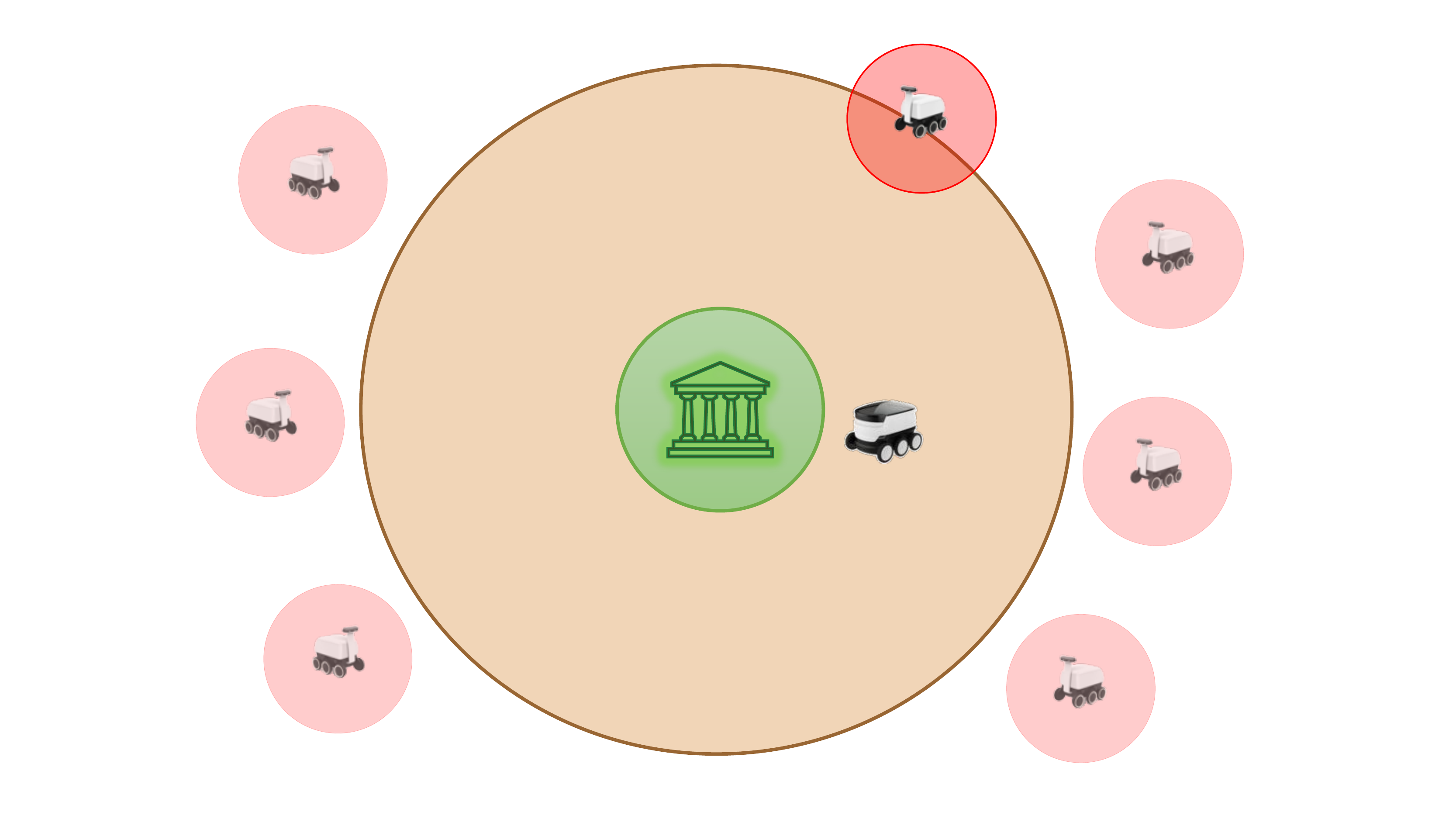}
    \put (-103, 40 ) {\line(1,0) {28}}
    \put (-90, 33) {$\rt$}
    \caption{\colorbox{green!30}{\textcolor{green!30}{gh}}: Target region with radius $\ro$, \colorbox{red!20}{\textcolor{red!20}{gh}}: intruders sensing region with radius $\ra$, \colorbox{orange!60!black!25}{\textcolor{orange!60!black!25}{gh}}: Target's sensing anulus with radius $\rt$.}
    \label{fig:introPic}
    \vspace{-6 pt}
\end{figure}
We consider a target guarding problem in ${\mathbb{R}^2}$ where a defender is tasked to protect a circular target region $\mathcal{R}_{T}=\{\x \in {\mathbb{R}^2}\ | \ \|\x\| \leq r_{_T}\}$ from an incoming sequence of intruders as schematically shown in Fig.~\ref{fig:introPic}.
The target has a sensing annulus of radius $\rt$ through which it can detect the presence of an intruder within this annulus region.
We will refer to this annulus as \textit{Target Sensing Region} (TSR).
The intruders appear on the TSR boundary sequentially.
That is, the next intruder appears on the TSR boundary as soon as the current one gets captured by the defender or breaches the target.
Typically in these types of games, the intruders are able to \textit{escape} from the TSR and the defender is unable to pursue those intruders afterwards \cite{9561995}. 
To prevent this scenario, we will consider a large enough TSR (see our parametric assumptions in \eqref{eq:Asm}) so that an intruder is not able to escape when it does not have a strategy to breach the target.
%
%
%
Therefore, each one-vs-one game terminates either in \textit{breach} of the target or in \textit{capture} of the intruder.
Each intruder appears at the TSR boundary randomly with a \textit{uniform probability} that is \textit{independent} of the previous arrivals. 
Due to the random arrivals, the number of capture (equivalently, the percentage of capture) is a random variable.
The objective of this work is to find a strategy for the defender that maximizes the expected percentage of capture. 
We will consider both the cases of finite and infinite sequences of arrivals and analyze the expected percentage of capture for both cases.

Let $\xa(t),\xd(t) \in \R$ denote the positions of the intruder and the defender at time $t$. 
Both the defender and  intruder are assumed to have first-order dynamics, i.e., 
\begin{align}
    \dot{\mathbf{x}}_A = v_A \uvec(\psi_A), \qquad \dot{\mathbf{x}}_D = v_D \uvec(\psi_D),
\end{align}
where the defender (intruder) directly controls its speed and heading angle by selecting $v_D$ and $\psi_D$ ($v_A$ and $\psi_A$), respectively.
In this problem, we assume that the defender and the intruder
have the speed limit of 1 and $\nu$, respectively, i.e., $|v_D(t)|\le 1$ and $|v_A(t)|\le \nu$ for all $t$. 
Furthermore, we assume that $\nu < 1 $, i.e., the defender is faster.
Without the constraint $\nu< 1$, it is impossible for the defender to prevent \textit{any} breaching and the percentage of capture will be zero.
Thus the outcome of the game is trivial for $\nu \ge 1$.

After arriving at the TSR boundary, the intruder moves radially toward the target center with maximum speed until it senses the defender.
The target and the defender work as a team and whenever an intruder is within the TSR, the defender has access to the intruder's instantaneous position $\xa(t)$.
The intruder is also equipped with its own sensor and  is able to sense the defender only if they are within a distance of $\ra$ or less.
Using this sensing capability, the intruder is able to find the right breaching point on the target, or able to get out of the TSR uncaptured, or is able to evade for some time before getting captured by the defender. 
 This evasive maneuver is an important capability for the intruder because this forces the defender to pursue and capture the intruder at a location that is unfavorable for the defender to start the next game.
 Thus, while getting captured, each intruder can maximize the chances of winning for the next intruder, which would not have been possible if $\ra =0$.
  %

\subsection{ Parametric Assumptions}
\par
Agent's strategies and the overall outcome of
this game depend on the game
parameters $\ro,\rho_A $, $\rho_T $ and $\nu $. In this paper we focus on a specific parameter regime
given by the following condition
\begin{align}
    \max\{(1+\nicefrac{2\nu}{(1 - \nu^2)}) \ra, ~~~ \nu \ro+\nicefrac{2\ra\nu^2}{(1 - \nu^2)} \} \le \rt
    \label{eq:Asm}
\end{align}
The first condition (i.e., $(1+\nicefrac{2\nu}{(1 - \nu^2)}) \ra\le \rt$) is necessary to ensure that there exists a strategy for the defender to capture the intruder inside the TSR. 
Otherwise, the intruder can \textit{evade} outside the TSR where the defender cannot detect it.
Thus, there can be a deadlock situation where every time the intruder cannot breach it gets out of the TSR and keeps trying indefinitely (and thus, not allowing any other intruders to appear).
The second condition {{(i.e., $\nu \ro+\nicefrac{2\ra\nu^2}{(1 - \nu^2)} \le \rt )$ is to ensure that if the defender does not have a strategy to capture the intruder and loses the current game, then it has enough time to reach the target center from the \textit{capture circle} (to be defined later) before this intruder breaches the target boundary and the next game starts.}} 
The need for these assumptions will be apparent from Remark~\ref{rem:assumption}.

\subsection{Apollonius Circle and its Importance}
Given the locations $\xa(t)$ and $\xd(t)$ of the agents  at time $t$, one can construct the locus of the points $\x$ such that the ratio of the distances from $\x$ to $\xa(t)$ and to $\xd(t)$ is $\nu$, i.e.,
\begin{align*}
    \|\x-\xa(t)\| = \nu \|\x - \xd(t)\|.
\end{align*}
For $\nu<1$, the set of such points lie on a circle with center $\xc(t)$ and radius $r_C(t)$ given by
\begin{align} \label{eq:AC}
    \xc(t) = \alpha\xa(t) - \beta \xd(t), \quad r_C(t) = \gamma\|\xa(t) - \xd(t)\|,
\end{align}
where 
\begin{align*}
    \alpha = (1-\nu^2)^{-1}, \quad \gamma = \nu \alpha,\quad \beta = \nu\gamma.
\end{align*}

The interior of the Apollonius circle is called the intruder's \textit{dominance region}. 
That is, the intruder can reach \textit{any} point within this region before getting captured.  
Both agents can reach any point on the perimeter of the Apollonius circle at the same time.
The following result from \cite[Theorem~1]{dorothy2021one} shows that the intruder \textit{cannot exit} the Apollonius circle without getting captured. 
That is, any point outside of the Apollonius circle is the dominance region of the defender.
\begin{lemma}[\!\!\cite{dorothy2021one}] \label{lem:arXiv}
Regardless of the strategy of the intruder, the defender has a strategy to capture the intruder arbitrarily close to the Apollonius circle. \hfill $\triangle$
\end{lemma}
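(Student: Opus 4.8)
The plan is to prove the stronger invariance statement from which the lemma follows: the defender has a strategy that keeps the intruder inside the closed disk bounded by the \emph{current} Apollonius circle for all time, while driving the radius $r_C(t)$ to zero. Capture on (arbitrarily close to) the circle is then the terminal limit of this nested contraction, and the ``cannot exit'' barrier property is exactly the invariance. First I would reinterpret the circle through time-to-reach functions. For a fixed point $\x$, the minimum arrival times are $T_A(\x,t) = \|\x - \xa(t)\|/\nu$ and $T_D(\x,t) = \|\x - \xd(t)\|$ (defender speed $1$). The defining relation $\|\x - \xa\| = \nu\|\x - \xd\|$ is precisely $T_A = T_D$, so the open disk is the set $\{T_A < T_D\}$, i.e. the intruder's dominance region, and its complement is the defender's. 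This reading yields two rate bounds that drive everything: by Cauchy--Schwarz and $|v_A|\le\nu$ one has $\dot T_A \ge -1$ (the intruder cannot shrink its own time-to-reach any point faster than rate one), while a defender steering straight at $\x$ at full speed achieves $\dot T_D = -1$.

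Next I would fix a defender strategy and establish the nesting. The natural candidate is constant-bearing (parallel) pursuit: choose the heading $\psi_D$ so that the line of sight $\xa(t)-\xd(t)$ does not rotate, i.e. $\sin(\psi_D - \lambda) = \nu\sin(\psi_A - \lambda)$ with $\lambda$ the line-of-sight angle, which is solvable exactly because $\nu<1$. The key computation is to differentiate the Apollonius data $\xc(t) = \alpha\xa(t)-\beta\xd(t)$ and $r_C(t) = \gamma\|\xa(t)-\xd(t)\|$ along the closed-loop dynamics and verify that the disks are nested: every point of a later circle lies inside every earlier one, for \emph{every} admissible intruder control. Equivalently, I would show that the functional measuring how far outside the initial circle the intruder could still reach is non-increasing. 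Under constant bearing the candidate interception point stays fixed in the plane and $\|\xa-\xd\|$ decreases at rate at least $1-\nu>0$, so $r_C(t)\to 0$ in finite time and the nested circles contract to a single capture point that, by construction, lies on every circle in the family.

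The hard part will be the minimax character: the strategy must succeed against every intruder trajectory, including nonsmooth evasive maneuvers, rather than against a committed destination. I would handle this as a differential-game invariance, checking that $\tfrac{d}{dt}$ of the relevant potential (the worst case over boundary points of $T_A - T_D$, equivalently the nesting functional) is $\le 0$ pointwise in the intruder's instantaneous control, using the two rate bounds above; then no measurable control can push the intruder across the boundary. The phrase \emph{arbitrarily close} reflects that exact capture on the circle is the $r_C\to 0$ limit: for any $\varepsilon$ the defender guarantees capture within $\varepsilon$ of the circle regardless of the intruder, with the boundary attained only in the idealized terminal configuration where the intruder plays to remain on it. I would close by invoking this limit to recover the stated result, and note that the complementary half---the intruder \emph{can} reach any interior point first---is immediate from $T_A < T_D$ there.
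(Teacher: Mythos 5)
The paper never proves this lemma --- it is imported wholesale from \cite{dorothy2021one} (their Theorem~1) --- so the relevant comparison is with that reference, and your route is essentially its classical argument: parallel (constant-bearing) pursuit, nesting of the instantaneous Apollonius disks against every admissible intruder control, and the closure rate $\|\xa-\xd\|' \le -(1-\nu)$ forcing finite-time contraction. The deferred ``key computation'' does close, and in a sharper form than you conjecture: differentiating $\xc = \alpha\xa - \beta\xd$ and $r_C = \gamma\|\xa-\xd\|$ under the constant-bearing law gives $\|\dot{\xc}\| = -\gamma\,\tfrac{d}{dt}\|\xa-\xd\|$ \emph{exactly}, for every instantaneous intruder heading, so consecutive disks are internally \emph{tangent}, not merely nested --- your ``nesting functional'' is conserved to first order rather than strictly decreasing, which is still enough. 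Two local corrections: the limit capture point lies in every closed disk but \emph{not} on every circle as you assert (it reaches the initial circle only if the intruder commits to the fixed tangency point, exactly the behavior the paper later exploits via $\x_p$ in Lemma~\ref{lem:xp}); and the crude rate bounds $\dot T_A \ge -1$, $\dot T_D = -1$ cannot by themselves power the invariance argument, since the defender cannot simultaneously head at every boundary point --- the tangency computation is what does the work, so do not present the rate bounds as the engine.

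The genuine gap is the information pattern. Your law $\sin(\psi_D - \lambda) = \nu\sin(\psi_A - \lambda)$ requires the intruder's \emph{instantaneous heading} $\psi_A$, but in this paper the defender senses only the position $\xa(t)$; constant bearing is therefore a counter-strategy, not an admissible feedback strategy, and your ``pointwise in the intruder's instantaneous control'' verification inherits the same inadmissibility. This is precisely why the lemma says ``arbitrarily close'': under exact parallel pursuit the intruder stays in the closed initial disk with no $\varepsilon$ at all, and the $\varepsilon$-slack is the price of approximating $\psi_A$ from position data (over a sampling interval $\delta$ the observed displacement yields an average heading, the tracking error injects an $O(\delta)$ drift into the nesting, and the intruder can exceed the initial circle by at most $\varepsilon(\delta)\to 0$). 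Your reading of ``arbitrarily close'' as merely the $r_C \to 0$ limit misses this point, which is the central contribution of the cited reference (together with the observation that naive laws such as pure pursuit fail the invariance outright). As written, your proposal proves the lemma under the strictly stronger assumption that the defender observes the intruder's velocity; to obtain the stated result in the paper's sensing model you must add the sampled-implementation layer.
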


Therefore, the Apollonius circle is a crucial component in the analysis of our game as it uniquely determines the reachable set of the intruder before it gets captured.

\subsection{Game Phases and Guarded Arc}
\textit{Full information phase}: In this phase, both agents can sense each other, i.e, $\|\xa(t)\|<\ro+\rt$ and $\|\xa(t)-\xd(t)\|\le \ra$.

\textit{Partial/Asymmetric information phase}: In this phase only the defender sees the intruder, i.e., $\|\xa(t)\|<\ro+\rt$ and $\|\xa(t)-\xd(t)\|>\ra$.
In this phase, the intruder comes radially toward the target until it senses the defender- at which point the full information phase starts.

\begin{lemma} \label{lem:captureFarthest}
Let the defender is located at $r\uvec(\theta_D)$ with $r\le \tilde{\rt} \triangleq \rt+ \ro$. 
This defender can capture any intruder incoming at an angle of $\theta_A$ such that 
%
 \begin{align*}
   |\theta_A - \theta_D| \le \begin{cases}
   \Theta_{\rm g} \qquad &\text{  if  } r > \frac{\rt}{\nu} - \ro,\\
   \pi \qquad &\text{  otherwise,}
   \end{cases} 
\end{align*}
where 
{\small
\[
\Theta_{\rm g} = 2\cos^{-1}\!\Bigg(\!\!    \sqrt{\frac{\Big((\ro+\nu r)^2-(\tilde{\rt}-\nu\ro)^2\Big)\Big( (\tilde\rt+\nu\ro)^2-(\nu r - \ro)^2 \Big)}{16\nu^2\ro^2r\tilde \rt}}\Bigg).
\]
}

\hfill $\triangle$


\end{lemma}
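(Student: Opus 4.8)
The plan is to convert the dynamic capture question into a static geometric condition on the initial Apollonius circle and then to carry out the algebra that turns it into the stated angular bound. Since the defender always knows the intruder's position within the TSR, it can run the Apollonius strategy of Lemma~\ref{lem:arXiv}; hence the intruder is confined to the closed Apollonius disk determined by the initial positions $\xa = \tilde{\rt}\,\uvec(\theta_A)$ and $\xd = r\,\uvec(\theta_D)$, and it can reach any interior point of that disk. Consequently the intruder breaches (reaches $\{\|\x\|\le\ro\}$) iff the target disk meets the Apollonius disk, so the defender is guaranteed to capture exactly when the two disks are disjoint, i.e.
\begin{align}
\|\xc\| \ge r_C + \ro. \label{eq:capcond}
\end{align}
Working with the full-information Apollonius circle only strengthens the intruder, so a guarantee here a fortiori covers the radial, partial-information intruder.

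The second step expands \eqref{eq:capcond} via \eqref{eq:AC}. With $\phi = \theta_A-\theta_D$ and $D = \|\xa-\xd\|$ I would write
\begin{align*}
\|\xc\|^2 &= \alpha^2\tilde{\rt}^2 + \beta^2 r^2 - 2\alpha\beta\,\tilde{\rt} r\cos\phi, \\
r_C^2 &= \gamma^2 D^2 = \gamma^2\bigl(\tilde{\rt}^2 + r^2 - 2\tilde{\rt} r\cos\phi\bigr).
\end{align*}
The crucial simplification is that $\gamma^2=\alpha\beta$, $\alpha^2-\gamma^2=\alpha$ and $\beta^2-\gamma^2=-\nu^2\alpha$, so the $\cos\phi$ cross-terms cancel in $\|\xc\|^2-r_C^2$, leaving the $\phi$-independent quantity $\alpha(\tilde{\rt}^2-\nu^2 r^2)-\ro^2$. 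Squaring \eqref{eq:capcond} (both sides nonnegative in the relevant regime) then reduces it to the single inequality $D\le D_\star$, where $D_\star = \frac{\alpha(\tilde{\rt}^2-\nu^2 r^2)-\ro^2}{2\ro\gamma}$ is the critical separation distance.

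The third step extracts the angle. Because $D^2=\tilde{\rt}^2+r^2-2\tilde{\rt} r\cos\phi$ is increasing in $\phi$ on $[0,\pi]$, the capture set is the interval $|\phi|\le\Theta_{\rm g}$ where $D(\Theta_{\rm g})=D_\star$. Using the half-angle identity $2\cos^2(\Theta_{\rm g}/2)=1+\cos\Theta_{\rm g}$ gives $\cos^2(\Theta_{\rm g}/2)=\frac{(\tilde{\rt}+r)^2-D_\star^2}{4\tilde{\rt} r}$, and I would factor the numerator as $(\tilde{\rt}+r-D_\star)(\tilde{\rt}+r+D_\star)$. Substituting $D_\star$ and completing squares turns each factor into a difference of squares, $\tilde{\rt}+r-D_\star = \frac{(\ro+\nu r)^2-(\tilde{\rt}-\nu\ro)^2}{2\nu\ro}$ and $\tilde{\rt}+r+D_\star = \frac{(\tilde{\rt}+\nu\ro)^2-(\nu r-\ro)^2}{2\nu\ro}$, which reproduces the stated $\Theta_{\rm g}$.

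Finally I would resolve the dichotomy. When $D_\star\ge\tilde{\rt}+r$ the condition $D\le D_\star$ holds for every $\phi\in[0,\pi]$, so the defender captures regardless of the incoming angle and the bound is $\pi$; the same factorization shows $D_\star\ge\tilde{\rt}+r \iff (\ro+\nu r)^2\le(\tilde{\rt}-\nu\ro)^2 \iff \nu(r+\ro)\le\rt \iff r\le\frac{\rt}{\nu}-\ro$, exactly the threshold in the statement. I expect the main obstacle to be the geometric reduction of Step~1 — justifying via Lemma~\ref{lem:arXiv} that the \emph{initial} Apollonius disk governs breachability for the whole pursuit, and pinning down the correct external-tangency form of the disjointness condition; once \eqref{eq:capcond} is established, the remaining algebra is lengthy but routine, and the cross-term cancellation is what makes it close cleanly.
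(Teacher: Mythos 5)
The paper omits its own proof of Lemma~\ref{lem:captureFarthest} (``due to space limitation''), so there is nothing published to compare against; judged on its own, your argument is correct, and the structure of the stated formula strongly indicates it is exactly the intended derivation. I checked the algebra: $\gamma^2=\alpha\beta$ indeed cancels the $\cos\phi$ cross-terms, so $\|\xc\|^2-r_C^2=\alpha(\tilde{\rt}^2-\nu^2r^2)$ is angle-independent; squaring the disjointness condition $\|\xc\|\ge r_C+\ro$ gives $D\le D_\star$ with $D_\star=\bigl(\tilde{\rt}^2-\nu^2r^2-(1-\nu^2)\ro^2\bigr)/(2\nu\ro)$; the half-angle identity and your two factorizations reproduce the stated $\Theta_{\rm g}$ verbatim; and $D_\star\ge\tilde{\rt}+r \iff \nu(r+\ro)\le\rt$ gives precisely the threshold $r\le \nicefrac{\rt}{\nu}-\ro$ for the all-angle case.

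Two small caveats, neither fatal. First, your ``exactly when'' (capture iff the disks are disjoint) is an overstatement: disjointness of the time-zero Apollonius disk is \emph{sufficient} for capture via Lemma~\ref{lem:arXiv}, but not necessary under the actual information structure, since the intruder is blind and moves radially during the partial-information phase, which lets the defender capture from some configurations where the initial disk overlaps the target --- this is precisely what the engagement-surface construction of Section~\ref{sec:partInfo} exploits. The lemma only asserts a sufficient capture condition, and your a fortiori remark already covers what is needed, so simply drop the ``iff''. Second, in the regime $\nicefrac{\rt}{\nu}-\ro<r\le\tilde{\rt}$ you implicitly assume $\Theta_{\rm g}$ is well defined, i.e.\ that the $\cos^{-1}$ argument does not exceed $1$; this requires $D_\star\ge \tilde{\rt}-r$, which by the same completing-the-square computation is equivalent to $(\tilde{\rt}-\nu\ro)^2\ge(\nu r-\ro)^2$ and holds for all $r\le\tilde{\rt}$ under Assumption~\eqref{eq:Asm} (which gives $\rt\ge\nu\ro$). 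A one-line check of this, together with noting that the first factor $(\ro+\nu r)^2-(\tilde{\rt}-\nu\ro)^2$ is nonnegative exactly when $r\ge\nicefrac{\rt}{\nu}-\ro$, would make the case analysis airtight.
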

\begin{proof}
A proof of this lemma is omitted due to space limitation.
\end{proof}

The above lemma shows that if the defender is located within $\nicefrac{\rt}{\nu} - \ro$ distance from the target center then it can capture \textit{any} incoming intruder.
Otherwise, it can capture only the intruders that are coming with an (absolute) angular separation of no more than $\Theta_{\rm g}$. 
We further notice that $\Theta_{\rm g}$ depends on the radial position of the defender (i.e., $r$), and the nature of this dependence is presented in the following corollary. 
%
\begin{corr} \label{corr:R_theta}
 $\Theta_{\rm g}$ is a decreasing function of $r$. $\hfill \triangle$
\end{corr}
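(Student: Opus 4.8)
The plan is to reduce the claim to a monotonicity statement about the radicand, simplify that radicand by a difference-of-squares identity, and then finish by exploiting concavity so that only the two endpoints of the admissible range of $r$ need to be checked.

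Write $a := \ro$ and $R := \tilde\rt$, and let $g(r)$ denote the expression under the square root, so that $\Theta_{\rm g} = 2\cos^{-1}\!\big(\sqrt{g(r)}\big)$ on the range $r \in \big(\tfrac{\rt}{\nu}-\ro,\ \tilde\rt\,\big]$ where this branch is active. Since $\cos^{-1}$ is strictly decreasing and $\sqrt{\cdot}$ strictly increasing, $\Theta_{\rm g}$ is decreasing in $r$ exactly when $g$ is increasing, so the corollary reduces to showing $g'(r)>0$. The first key step is to clean up $g$. Call the two bracketed factors in the numerator $A$ and $B$. Expanding and collecting terms gives $A = V + (U-W)$ and $B = V - (U-W)$, where $V = 2\nu a(r+R)$ and $U-W = \nu^2 r^2 - c$ with $c := R^2 - a^2(1-\nu^2)$. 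Hence the numerator is $AB = V^2 - (U-W)^2$, and after cancelling $4\nu^2 a^2$ against the denominator one obtains the much cleaner form
\[
g(r) = \frac{(r+R)^2}{4rR} - \frac{(\nu^2 r^2 - c)^2}{16\nu^2 a^2\, rR}.
\]

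Next I would differentiate. Writing $g = h/(4rR)$ with $h(r) = (r+R)^2 - (\nu^2 r^2 - c)^2/(4\nu^2 a^2)$, one gets $g' = (r h' - h)/(4 r^2 R)$, so $g'>0$ is equivalent to $N := 4\nu^2 a^2(r h' - h) > 0$. A short computation shows that, as a function of $s := r^2$, $N$ is a \emph{concave} (downward) parabola,
\[
N(s) = -3\nu^4 s^2 + 2\nu^2(c + 2a^2)s + \big(c^2 - 4\nu^2 a^2 R^2\big).
\]
Concavity is what makes the argument finish cleanly: a concave function that is positive at the two endpoints of an interval is positive throughout it, so it suffices to verify $N>0$ at $s_0 := (\tfrac{\rt}{\nu}-\ro)^2$ and at $s_1 := \tilde\rt^{\,2}$. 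At the left endpoint $r = \tfrac{\rt}{\nu}-\ro$ the factor $A$ vanishes (this is precisely the boundary of the $\Theta_{\rm g}$-branch, where $g=0$), which forces $\nu^2 r^2 - c = -2\nu a(r+R)$; substituting this collapses $N$ to $8\nu^2 a\, r (r+R)(a+\nu r)$, manifestly positive. At the right endpoint $r = R$ one finds $\nu^2 r^2 - c = -(1-\nu^2)(R^2 - a^2)$ and $N = (\nu^2 r^2 - c)\big((\nu^2 r^2 - c) - 4\nu^2 R^2\big)$, a product of two negative quantities (using $R>\ro$ and $\nu<1$), hence positive. Therefore $N>0$ on all of $[s_0,s_1]$, giving $g'>0$ and the claim.

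The main obstacle is entirely bookkeeping rather than conceptual: casting the numerator into the $V^2-(U-W)^2$ form and confirming that the leading coefficient of $N(s)$ is negative, so that the concavity/endpoint shortcut is legitimate. Once those are in place, the two endpoint evaluations are the only genuine computations, and each factors neatly into a sign-definite product. (If $\tfrac{\rt}{\nu}-\ro \ge \tilde\rt$ the branch is empty and the statement is vacuous, so one may assume $s_0 < s_1$ throughout.)
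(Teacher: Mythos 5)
Your argument is correct, and it cannot be compared against the paper's reasoning because the paper omits the proof of Corollary~\ref{corr:R_theta} entirely (``omitted due to space limitation''); your write-up would serve as a complete, self-contained substitute. I checked the key computations: with $a=\ro$, $R=\tilde\rt$, $V=2\nu a(r+R)$, $c=R^2-a^2(1-\nu^2)$ and $u=\nu^2r^2-c$, the two factors in the radicand of Lemma~\ref{lem:captureFarthest} do satisfy $A=V+u$, $B=V-u$, so $AB=V^2-u^2$ and the radicand collapses to $g(r)=\frac{(r+R)^2}{4rR}-\frac{u^2}{16\nu^2a^2rR}$ as you claim; the quotient-rule reduction $g'=(rh'-h)/(4r^2R)$ is right, and $N:=4\nu^2a^2(rh'-h)$ indeed equals $-3\nu^4s^2+2\nu^2(c+2a^2)s+(c^2-4\nu^2a^2R^2)$ in $s=r^2$, a concave parabola. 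Both endpoint evaluations check out: at $r=\rt/\nu-\ro$ one has $A=0$, i.e.\ $u=-V$, and $N$ factors as $8\nu^2ar(r+R)(a+\nu r)>0$; at $r=R$ one has $u=-(1-\nu^2)(R^2-a^2)<0$ and $N=u\left(u-4\nu^2R^2\right)>0$. Since a concave function positive at the endpoints of an interval is positive throughout, $g'>0$ on the branch, and the decreasing composition $2\cos^{-1}(\sqrt{\cdot})$ gives the corollary. Two small points you should make explicit in a polished version: (i) passing from $s$ to $r$ uses that $r\mapsto r^2$ is increasing on $r>0$, so positivity of $N$ on $[s_0,s_1]$ covers all $r\in[\rt/\nu-\ro,\tilde\rt]$; and (ii) the monotonicity of $\cos^{-1}(\sqrt{g})$ presupposes $g\in[0,1]$ on the branch, which is guaranteed by the geometric origin of the formula in Lemma~\ref{lem:captureFarthest} and is consistent with your observation that $g=0$ (hence $\Theta_{\rm g}=\pi$, matching the other branch continuously) exactly at $r=\rt/\nu-\ro$. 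Neither point is a gap; the proof stands.
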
 
 \begin{proof}
A proof of this corollary is omitted due to space limitation.
\end{proof}

 Therefore, it is desirable for the defender to stay as close to the target center as possible.
 This observation will be an instrumental component in determining the strategies for the defender and the intruders.

 Next we analyze each of the game phases in detail.  

\section{Partial Information Phase} \label{sec:partInfo}
In this phase, the intruder moves radially toward the center of the target with full speed until it senses the defender and starts the \textit{full information phase}.\footnote{
In this phase, the defender has information about the intruder's location and can strategically position itself to capture this intruder whenever capture is possible.
On the other hand, if the intruder does not come radially toward the target center, it spends a longer time within the TSR without knowing which direction is beneficial to move for avoiding the defender, which only benefits the defender.
A formal proof of the last statement is beyond the scope of this paper and we simply assume that the intruders are prescribed to move radially in this phase.
}
The defender's objective in this phase is to leverage the sensing (information) advantage it has and use it to start the next phase in a favorable configuration. 
Therefore, to discuss the strategy of the defender in this phase, we need to understand what configurations are favorable for the next phase and whether such a configuration is achievable by the defender. 
This is discussed in the following.

\subsection{Engagement Surface}
Suppose the full information phase starts after a $\tau$ amount of time from the appearance of the intruder at the TSR boundary. 
Without any loss of generality, we assume that each intruder appears on the TSR boundary at time $0$ at the point $[\ro+\rt,~ 0]$.
Since  the intruder moves radially toward the target center with maximum velocity during the partial information phase, the intruder shall be at a location $\xa(\tau)= [\ro+ \rho_T -\tau \nu, \ 0]^\intercal$ at time $\tau$ when the \textit{full information phase} starts, i.e., the intruder first senses the defender. 
Let the defender be located  at a $\theta$ angle on intruder’s sensing boundary at that time, i.e.,
\begin{equation} \label{eq:xdThetaxa}
    \xd(\tau) = \xa(\tau) + \rho_{A}\uvec(\theta).
\end{equation}
%
Therefore, from \eqref{eq:AC}, the center of the Apollonius circle is  at
$\xc(\tau) = \xa(\tau) -\beta \ra\uvec(\theta)$
and the radius is $\gamma \ra$. 
To guarantee capture of the intruder, the Apollonius circle must be contained within the TSR without intersecting with interior of the target, i.e.,
\begin{subequations} \label{eq:engageConditions}
\begin{align}
&\|\xc(\tau)\| \geq \ro +\gamma \ra,\\
&\|\xc(\tau)\|\le \ro + \rt- \gamma\rho_{A}.
\end{align}
\end{subequations}
Notice that, due to properties of the Apollonius circle and Lemma~\ref{lem:arXiv}, equation \eqref{eq:engageConditions} is necessary and sufficient to ensure capture.
See Fig.~\ref{fig:engage_config} for a graphical representation. 
\begin{figure}
    \centering
    \fbox{\includegraphics[trim= 300 160 270 150, clip, width = 0.22\textwidth]{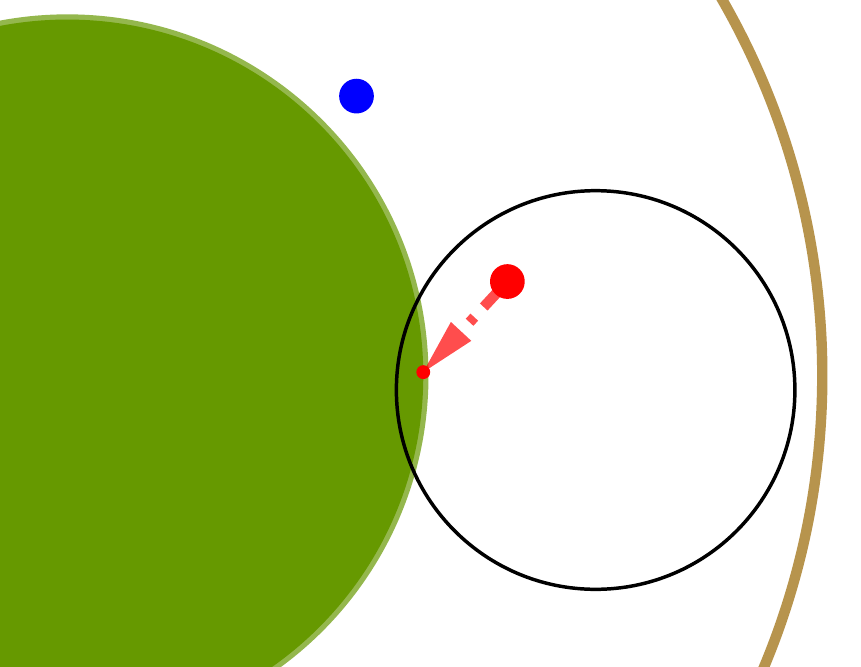}}
    \put(-50,80) {breach}
    \hspace{1 cm}
    \fbox{\includegraphics[trim= 300 160 270 150, clip, width = 0.22\textwidth]{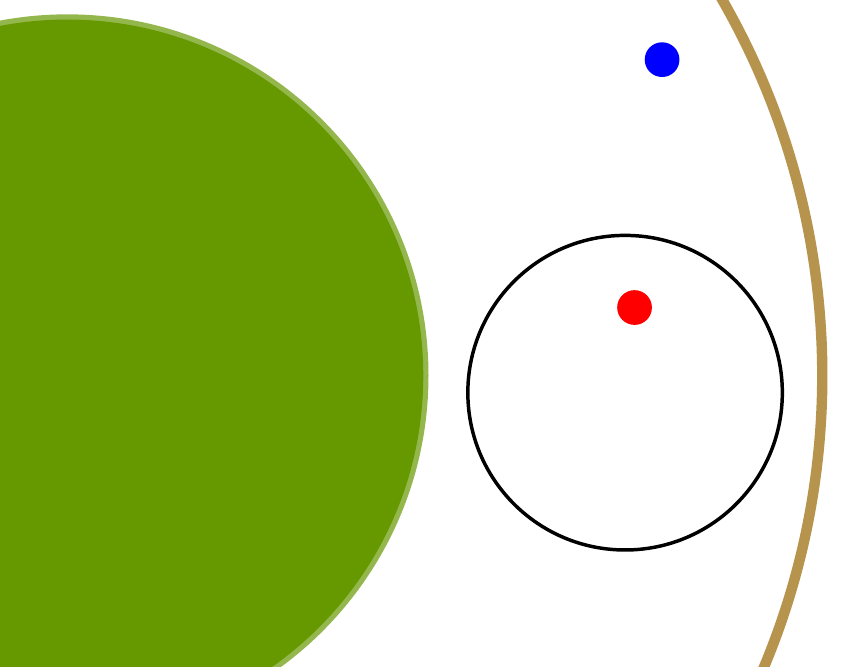}}
     \put(-70,80) {capture}
    \caption{ The blue and red dots represent the positions of the defender and the intruder, respectively. The green region is a part of the circular target and the brown arc represents a part the TSR boundary. The black circles represent the Apollonius circles. 
    In the left figure the intruder can breach the target. 
    Such a breaching location is shown by the dashed red vector. 
    The right figure shows a configuration that leads to capture. 
    }
    \label{fig:engage_config}
\end{figure}
Satisfiability of the two conditions in \eqref{eq:engageConditions} depends on the choice of $\tau$ and $\theta$ in \eqref{eq:xdThetaxa}, and there could be multiple such choices for $\tau$ and $\theta$. 
We next analyze which of these choices are going to help the defender in the current game and maximize the probability of winning the subsequent games. 

 Since the next intruder appears at the TSR boundary as soon as the current one is captured (or breaches the target), the defender needs to ensure that the current one is captured at a location that maximizes the chance of capturing the next one.
Recall from Corollary~\ref{corr:R_theta} that the \textit{capture angle} is maximized if the defender is able to capture the current one as close to the target center as possible. 
Or in other words, the Apollonius circle should be as close to the target center as possible while satisfying \eqref{eq:engageConditions}.
Therefore, $\theta$ and $\tau$ should be chosen such that the Apollonius circle is tangent to the target boundary, i.e.,
\begin{align} \label{eq:tangentCondition}
    \|\xc(\tau)\| = \ro +\gamma \rho_A.
\end{align}
Notice that, due to Assumption~\eqref{eq:Asm}, equation \eqref{eq:tangentCondition} ensures satisfaction of both the conditions in \eqref{eq:engageConditions}.
Equation \eqref{eq:tangentCondition} simplifies to
\begin{equation} \label{eq:engagementSurface}
    \sin^2(\theta/2) =   \frac{(\ro+\gamma\ra)^2 - (\ro+\rt-\tau \nu -\beta\ra)^2}{4\beta\rho_A(\ro+\rt-\tau \nu)},
\end{equation}
which provides a feasible engagement time and location pair $(\tau, \theta)$. 
The locus of these engagement points, i.e., $\xa(\taue) + \ra\uvec(\thetae)$ is plotted in Fig.~\ref{fig:engagement_surface}.

\begin{definition}[Engagement Surface]
This is the collection of the pairs $(\taue,\thetae)$ that satisfy \eqref{eq:engagementSurface}. 
That is, $\Se = \{(\thetae, \taue)~:~\thetae, \taue \text{ satisfy }\eqref{eq:engagementSurface}\}.$ \hfill$\triangle$
\end{definition}

If the \textit{full information phase} starts at a point on the \textit{engagement surface}, then \eqref{eq:tangentCondition} is satisfied and  due to Assumption~\eqref{eq:Asm} and Lemma~\ref{lem:arXiv} the intruder cannot breach or get out of the TSR before getting captured.
Therefore, the intruder's objective in this case is to get captured the farthest from the target center.
This will increase the breach probability for the next intruder due to Corollary~\ref{corr:R_theta}.
The farthest capture point will be at a radial location of $\ro + 2\gamma\ra$ since the radius of Apollonius circle is $\gamma\ra$ and the center is at a distance $\ro+\gamma\ra$ from the target center (see \eqref{eq:tangentCondition}). 
The circle with radius $\ro+2\gamma\ra$ and concentric with the target is referred to as the \textit{capture circle}, since all the captures of the intruders will happen on this circle.

\begin{figure}
    \centering
   { \includegraphics[trim = 40 20 40 20, clip, width=0.27\textwidth]{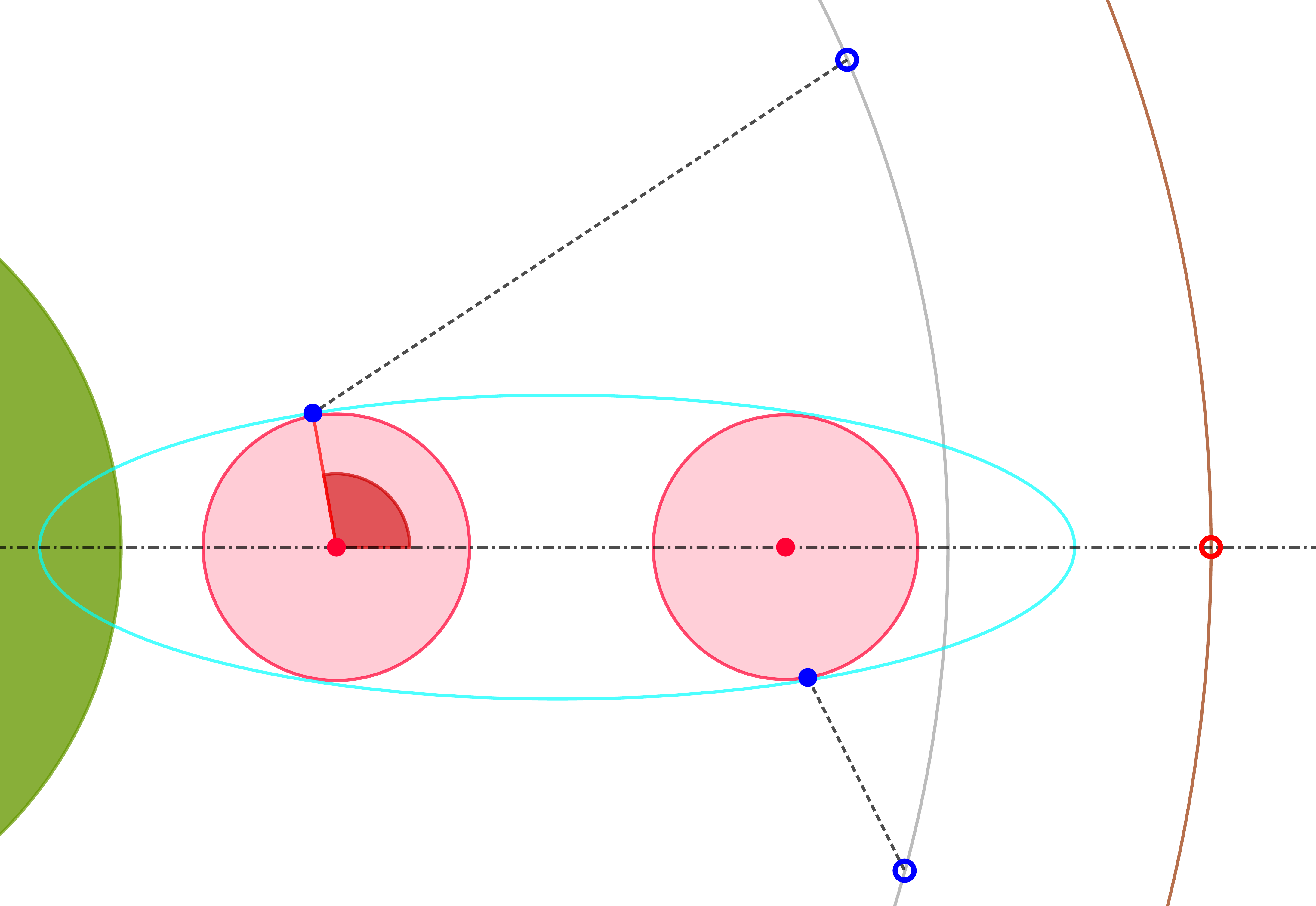}}
     \put(-108, 27) {\small \re $\xa(\taue)$}
     \put(-95, 45) {\small \re $\thetae$}
     \put(-110, 55) {\small  $x_1$}
     \put(-44, 90) {\small {$D_1$}}
      \put(-38, 0) {\small {$D_2$}}
      \put(-58, 15) {\small {$x_2$}}
    \hspace{1 cm} 
    {\includegraphics[trim = 112 120 40 20, clip, width=0.19\textwidth]{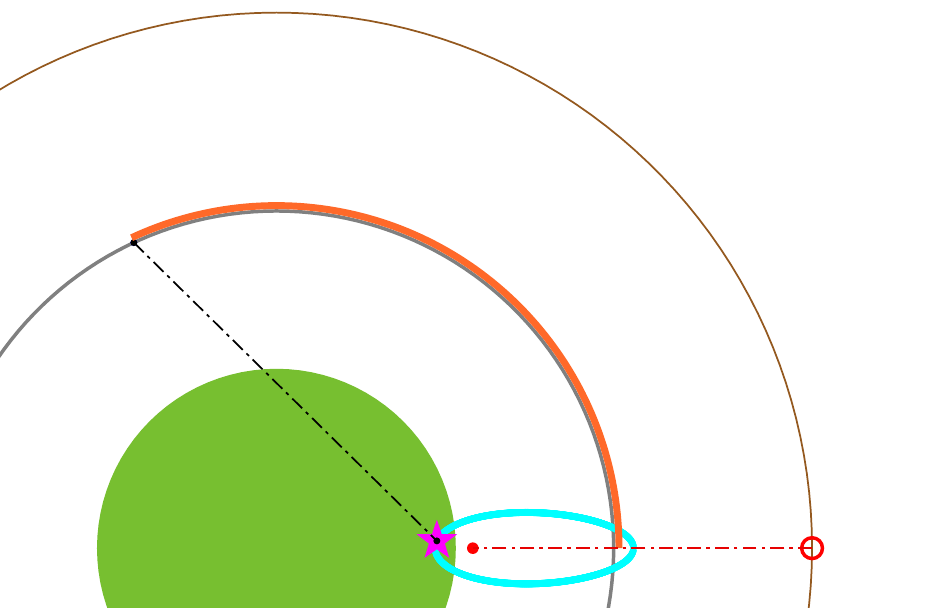}}
    \caption{ 
    (Left) The cyan oval-shaped curve represents the locus of the engagement points $\xa(\taue) + \ra\uvec(\thetae)$. 
    We demonstrate two configurations where a defender $D_1$ (or $D_2$) starts the full information phase by engaging at the point $x_1$ (or $x_2$) when the intruder is at $\xa(\taue)$.
        (Right) The zoomed out version of the left figure. 
        The outer most (brown) arc represents a part of the TSR boundary. 
         The gray arc is the \textit{capture circle}.
    The orange part of the capture circle represents all the possible locations of the defender such that it can reach the magenta star before time $\taue$.
    The magenta star represents one possible engagement point. 
    The locations of the intruder at the time of engagement and its initial location on TSR boundary are shown using  solid red dots and a hollow red circle, respectively. 
    The trajectories taken by the agents are shown using (dotted-)dashed lines.
    }
    \label{fig:engagement_surface}
    \vspace{- 10 pt}
\end{figure}


\subsection{Reachability to the Engagement Surface}
Let the defender be located at a point $r\uvec({\theta_D})$ at the time the intruder enters the TSR boundary at $[\ro +\rt, 0]^\intercal$. 
The following theorem provides the conditions on $r$ and $\theta_D$ such that the defender is able to reach the engagement surface at $(\taue, \thetae) \in \Se$.

\begin{theorem}[Necessary] \label{thm:angularSeparation}
A necessary condition for a defender located at $r\uvec({\theta_D})$ to reach $(\taue, \thetae) \in \Se$ before getting detected by the intruder is $\theta_D \le \theta_{\max}(\taue,\thetae)$, where
\begin{align}
    &\theta_{\max}(\taue,\thetae, r) = \cos^{-1} \bigg( \frac{ \reng^2 +r^2 - \taue^2}{2\reng r} \bigg)  + \phie \\
    &\phie = \sin^{-1}\bigg(\frac{\ra \sin(\thetae)}{\reng} \bigg)  \nonumber
    \end{align}
    \begin{align}
    & \reng = \Big((\ro+\rt-\taue\nu -\ra)^2 \nonumber\\
    & \qquad\quad + 4(\ro+\rt-\taue\nu) \ra\cos^2(\thetae/2)\Big)^{1/2}. \nonumber \qquad \hfill \triangle
    \end{align} 
\end{theorem}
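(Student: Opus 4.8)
The plan is to reduce this reachability question to pure distance geometry in polar coordinates centered at the target. First I would pin down the defender's required position at the instant the full information phase begins. Since the intruder travels radially inward from $[\ro+\rt,0]^\intercal$ at speed $\nu$, it sits at $\xa(\taue)=[\ro+\rt-\taue\nu,\,0]^\intercal$ at engagement, and by \eqref{eq:xdThetaxa} the defender must occupy $\xd(\taue)=\xa(\taue)+\ra\uvec(\thetae)$. Writing this point in polar form $\xd(\taue)=\reng\uvec(\phie)$, a direct norm computation gives $\reng^2=(\ro+\rt-\taue\nu)^2+2(\ro+\rt-\taue\nu)\ra\cos\thetae+\ra^2$; substituting $\cos\thetae=2\cos^2(\thetae/2)-1$ rearranges this into the stated compact expression for $\reng$, and matching the $y$-component $\ra\sin\thetae$ against $\reng\sin\phie$ yields $\phie=\sin^{-1}(\ra\sin\thetae/\reng)$.

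Next I would impose the kinematic constraint. The defender starts at $r\uvec(\theta_D)$ at time $0$ and, to open the full information phase on $\Se$, must be at $\reng\uvec(\phie)$ at time $\taue$. Because its speed is bounded by $1$, the length of any trajectory it follows over $[0,\taue]$ is at most $\taue$; in particular the straight-line chord between initial and engagement positions, being a lower bound on path length, must satisfy $\|r\uvec(\theta_D)-\reng\uvec(\phie)\|\le\taue$. Applying the law of cosines to the triangle formed by the origin and these two points gives $r^2+\reng^2-2r\reng\cos(\theta_D-\phie)\le\taue^2$, hence $\cos(\theta_D-\phie)\ge(r^2+\reng^2-\taue^2)/(2r\reng)$. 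Inverting the cosine and retaining the side on which the defender must get \emph{ahead} of the incoming intruder produces the claimed upper bound $\theta_D\le\cos^{-1}\big((\reng^2+r^2-\taue^2)/(2\reng r)\big)+\phie=\theta_{\max}$.

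The point that makes this a legitimate \emph{necessary} condition is that the non-detection requirement only shrinks the feasible set: any path that also avoids the disk of radius $\ra$ about the moving intruder is never shorter than the free chord, so the bound $\le\taue$ is forced whether or not detection is respected. The main obstacle I anticipate is bookkeeping rather than conceptual, namely verifying the algebraic identity that collapses $\|\xd(\taue)\|^2$ into the displayed $\reng$ formula, and confirming that $\phie$ lands on the correct branch of $\sin^{-1}$ over the admissible range of $\thetae$ so that it is genuinely the polar angle of the engagement point and the law-of-cosines angle $\theta_D-\phie$ is measured consistently. Once those identities are secured, the bound follows immediately from monotonicity of $\cos^{-1}$ on $[0,\pi]$.
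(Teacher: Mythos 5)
Your proof is correct and reconstructs exactly the argument encoded in the theorem's formulas (the paper itself omits the proof for space): $\reng$ and $\phie$ are precisely the polar coordinates of the engagement point $\xa(\taue)+\ra\uvec(\thetae)$, and $\theta_{\max}$ follows from the law of cosines applied to the origin-centered triangle together with the speed-limit chord constraint $\|r\uvec(\theta_D)-\reng\uvec(\phie)\|\le\taue$. Your key observation --- that the non-detection requirement only shrinks the feasible set, so the free straight-line distance bound is genuinely \emph{necessary}, with detection-avoidance along the path deferred to the sufficiency result (Theorem~\ref{thm:sufficiency}) --- is exactly the right division of labor between the two theorems.
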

\begin{proof}
The proof is omitted due to space limitation.
\end{proof}

Theorem~\ref{thm:angularSeparation} provides the maximum allowed initial angular separation between the intruder and the defender to ensure capture. 
Since there are many possible engagement points $(\taue,\thetae)$, we shall optimize over $(\taue,\thetae)$ to maximize $\theta_{\max}(\taue,\thetae)$.
{Without loss of generality, henceforth we will denote $(\taue, \thetae)$ to be the one that maximizes $\theta_{\max}(\taue,\thetae)$.}


Theorem~\ref{thm:angularSeparation} gives the necessary condition on the defender's initial location to ensure reachability to the engagement surface. 
However, the theorem does not provide any guarantee whether the defender will be detected by the intruder along its way to  $\x_{\rm eng}$.
For example, a scenario like the one presented in Fig.~\ref{fig:sufficient} may occur. 
In the following theorem we formally prove that a scenario like  Fig.~\ref{fig:sufficient} shall not occur under a certain condition on $r$. 
In other words, the necessary condition in Theorem~\ref{thm:angularSeparation} is also sufficient.
\begin{figure}
    \centering
    \includegraphics[trim = 50 160 128 150, clip, width = 0.45\textwidth]{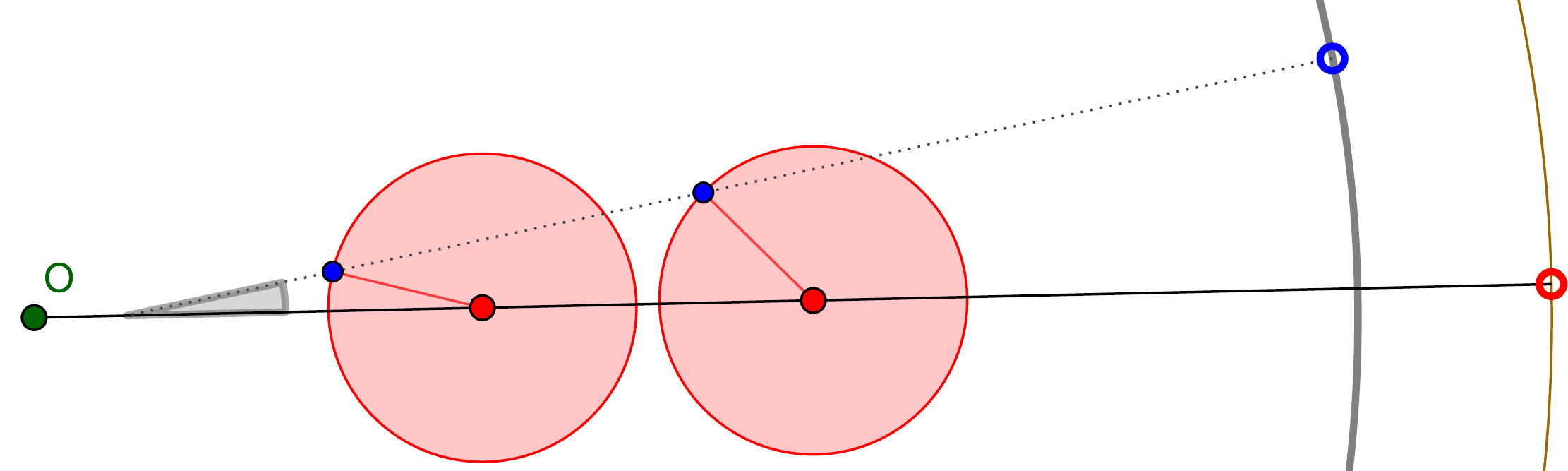}
    \put(-175,15) {$\xa(\taue)$}
    \put(-120,15) {$\xa(\tau_1)$}
    \put(-115,35) {\re$\ra$}
    \put(-30,25) {$\underset{{\ro+\rt - r}}{\underbrace{~~}}$}
    \put(-195,30) {\color{gray}$\theta$}
    \caption{ {\re $\medbullet$ } and {\color{blue} $\medbullet$ }  denote the location of the intruder and defender at times $\tau_1$ and $\taue$. The hollow red and blue circles denote their initial locations.
    The brown arc denotes a part of the TSR boundary. 
    The gray arc has a radius of $r$ and it is concentric to the target center (denoted by the green point $O$). 
    The defender moves along the dotted line. 
    The angle between the defender's path and the intruder's path is denoted by $\theta$.
    } \label{fig:sufficient}
\end{figure}
\begin{theorem}[Sufficient]\label{thm:sufficiency}
  If $r\le \ro + \rt - \ra$, then $\theta_D \le \theta_{\max}(\taue,\thetae)$ is a sufficient condition for a defender located at $r\uvec({\theta_D})$ to reach $(\taue, \thetae) \in \Se$ before getting detected by the intruder. $\hfill \triangle$
\end{theorem}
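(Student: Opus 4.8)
The plan is to exhibit one explicit admissible defender trajectory that arrives at the engagement point exactly at time $\taue$ while keeping its distance to the intruder strictly above $\ra$ on $[0,\taue)$; since the claim is an existence (sufficiency) statement, a single good witness trajectory suffices. Without loss of generality I would place the intruder on the positive axis moving radially inward, $\xa(t)=(\ro+\rt-\nu t)\uvec(0)$ for $t\in[0,\taue]$, so that $\dot\xa=-\nu\uvec(0)$. The fixed pair $(\taue,\thetae)$ is, by the standing convention, the maximizer of $\theta_{\max}$; I expect this optimizer to be the \emph{radial} interception $\thetae=\pi$, for which $\xd(\taue)=\xa(\taue)-\ra\uvec(0)$ lies between intruder and target, the engagement radius $\reng=\|\xa(\taue)\|-\ra$ is smallest, and $\theta_{\max}=\pi$ attains its largest possible value (in line with Corollary~\ref{corr:R_theta}). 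I would then let the defender move along the straight segment from $r\uvec(\theta_D)$ to $\xd(\taue)$ at constant velocity, reaching it at time $\taue$; the required speed is $\|\xd(\taue)-r\uvec(\theta_D)\|/\taue\le 1$ exactly because $\theta_D\le\theta_{\max}$ (Theorem~\ref{thm:angularSeparation}), so this trajectory is admissible, and ``no premature detection'' is the statement $\|\xd(t)-\xa(t)\|\ge\ra$ for all $t\in[0,\taue)$.

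The central simplification is that $g(t):=\|\xd(t)-\xa(t)\|^2$ is a \emph{convex quadratic} in $t$, because both $\xa(t)$ and $\xd(t)$ are affine in $t$ along these straight motions. Since the defender lands on the intruder's sensing circle, $g(\taue)=\ra^2$, so $\taue$ is a root of $g-\ra^2$ and the sublevel set $\{t:g(t)<\ra^2\}$ is an open interval with $\taue$ as one endpoint. Consequently $g\ge\ra^2$ on all of $[0,\taue]$---equivalently, detection first occurs exactly at $\taue$---if and only if $g$ is non-increasing there, which for a convex quadratic is the single endpoint inequality $g'(\taue)\le 0$. This reduces the entire ``reach-without-being-detected'' question to the sign of one derivative.

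Finally I would evaluate $g'(\taue)=2\big(\xd(\taue)-\xa(\taue)\big)\cdot(\dot\xd-\dot\xa)$ using $\xd(\taue)-\xa(\taue)=\ra\uvec(\thetae)$ and the constant velocity $\dot\xd=\big(\xd(\taue)-r\uvec(\theta_D)\big)/\taue$. A short computation shows the $\nu$-dependent contributions cancel, and at the radial engagement $\thetae=\pi$ the result collapses to
\begin{equation*}
g'(\taue)=\frac{2\ra}{\taue}\Big(r\cos\theta_D-(\ro+\rt-\ra)\Big).
\end{equation*}
Because $r\cos\theta_D\le r\le \ro+\rt-\ra$ under the hypothesis, this is $\le 0$, which establishes the theorem; note the threshold $\ro+\rt-\ra$ enters tightly, which is exactly why the stated bound on $r$ takes this form. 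I expect the main obstacle to lie not in this final algebra but in the front end: rigorously justifying that the $\theta_{\max}$-maximizing engagement is the radial one $\thetae=\pi$ (so that it is enough to analyze that configuration), together with the bootstrapping remark that, absent early detection, the intruder's assumed radial motion on $[0,\taue]$ is self-consistent, and the treatment of the degenerate cases where $g$ is affine/constant or $g'(\taue)=0$ (contact precisely at $\taue$).
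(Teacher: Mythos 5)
The paper omits its own proof of this theorem, so your attempt can only be judged on its merits. Your machinery is sound and, in fact, elegant: with the intruder moving radially and the defender taking the straight-line constant-velocity witness trajectory (admissible precisely when $\theta_D\le\theta_{\max}$, by Theorem~\ref{thm:angularSeparation}), the squared separation $g(t)=\|\xd(t)-\xa(t)\|^2$ is a convex quadratic with $g(\taue)=\ra^2$, and your reduction of ``no premature detection on $[0,\taue)$'' to the single endpoint inequality $g'(\taue)\le 0$ is correct (convexity forces $g$ nonincreasing on $[0,\taue]$ in that case, and conversely $g'(\taue)>0$ would put an open subinterval of $[0,\taue)$ inside the sensing disk). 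Your endpoint computation is also right, and the cancellation you observed holds in general: since $\xd(\taue)-\xa(\taue)=\ra\uvec(\thetae)$ and $\ro+\rt-\nu\taue+\nu\taue=\ro+\rt$, one gets
\begin{equation*}
g'(\taue)=\frac{2\ra}{\taue}\Big((\ro+\rt)\cos\thetae+\ra-r\cos(\thetae-\theta_D)\Big),
\end{equation*}
which at $\thetae=\pi$ collapses to your expression $\frac{2\ra}{\taue}\big(r\cos\theta_D-(\ro+\rt-\ra)\big)$, making the hypothesis $r\le\ro+\rt-\ra$ exactly the needed bound.

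The genuine gap is the front-end assumption you yourself flagged: that the $\theta_{\max}$-maximizing engagement is the head-on one, $\thetae=\pi$. This is false in general. With the paper's own simulation parameters ($\ro=5$, $\ra=1$, $\rt=10$, $\nu=0.8$, defender on the capture circle $r=\ro+2\gamma\ra\approx 9.444$), the engagement surface \eqref{eq:engagementSurface} gives at $\thetae=\pi$ the values $\taue\approx 11.944$, $\reng\approx 4.444$, $\phie=0$ and $\theta_{\max}\approx 1.984$ rad, whereas $\thetae\approx 2.9$ gives $\taue\approx 11.896$, $\reng\approx 4.519$, $\phie\approx 0.053$ and $\theta_{\max}\approx 2.009$ rad: the optimizer is rotated toward the defender's side, trading a slightly larger $\reng$ for a positive $\phie$ offset. (Your side remark that $\theta_{\max}=\pi$ at the radial interception also cannot be right on the capture circle, since then no breach would ever occur and $p^*=1$; and Corollary~\ref{corr:R_theta} concerns monotonicity in the defender's radius $r$, not in $\thetae$, so it does not support the radial-engagement claim.) Consequently your final algebra certifies non-detection only along the path to the suboptimal $\thetae=\pi$ point, i.e., sufficiency for the strictly smaller threshold $\theta_{\max}(\cdot,\pi)$ --- not for the pair $(\taue,\thetae)$ that the paper fixes by convention and that Algorithm~\ref{algo} uses when the defender starts on the capture circle. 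To close the gap within your framework you would need $g'(\taue)\le 0$ at the actual maximizer, i.e., $(\ro+\rt)\cos\thetae+\ra\le r\cos(\thetae-\theta_D)$ for all $\theta_D\le\theta_{\max}$; a clean sufficient route is to show the maximizing pair satisfies $\cos\thetae\le -\nicefrac{(r+\ra)}{(\ro+\rt)}$ (true in the numerical example above, with $-0.971\le-0.696$), but that requires an analysis of the engagement surface that the proposal does not attempt.
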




\begin{proof}
The proof is omitted due to space limitation.
\end{proof}

The following remark is due to Theorem~\ref{thm:sufficiency} and Assumption~\eqref{eq:Asm}. 
This remark is important for analyzing the entire game with multiple arrivals.

\begin{remark} \label{rem:assumption}
If a defender is located on the \textit{capture circle} at the time an intruder appears on the TSR boundary, then $r=\ro+2\gamma\ra < \ro+\rt-\ra$ due to Assumption~\eqref{eq:Asm}.
Therefore, if $\theta_D\le \theta_{\max}(\taue,\thetae)$ then due to Theorem~\ref{thm:sufficiency}, the defender is able to reach the engagement surface and the intruder will then be captured on the \textit{capture circle}.
As a consequence, the condition (on $r$) for Theorem~\ref{thm:sufficiency} will be satisfied for the next game too.
On the other hand, if $\theta_D > \theta_{\max}(\taue,\thetae)$, then  the defender will lose the current game.
Therefore, in that case, the defender goes back to the target center  to ensure that it wins the next game with probability 1. 
Due to the Assumption that $\rt/\nu > \ro + 2\gamma\ra$ in \eqref{eq:Asm}, the defender can reach the target center before the next game starts. \hfill $\triangle$
\end{remark}

Based on the above remark, we conclude this section with the following proposition on the defender's strategy during the partial information phase.
\begin{prop}[Defender Strategy]
If $\theta_D\!>\!\theta_{\max}(\taue,\thetae)$, then the optimal strategy for the defender is to go to the center of the target, otherwise, it should go to the point $(\taue,\thetae)\in \Se$ to start the full information phase. $\hfill \triangle$
\end{prop}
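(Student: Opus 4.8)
The plan is to argue that the defender's proposed strategy is optimal by establishing two facts: that the strategy is \emph{feasible} (the defender can indeed execute it), and that no alternative strategy yields a higher expected capture percentage, both for the current game and for the sequence of future games. The key observation, which I would make first, is that the entire future of the game is summarized by a single state variable at the start of each one-vs-one game, namely the defender's radial distance $r$ from the target center when the next intruder appears. By Corollary~\ref{corr:R_theta}, the guaranteed capture angle $\Theta_{\rm g}$ is decreasing in $r$, so smaller $r$ is strictly better for the defender in every subsequent game. Hence I would reduce the optimization to a two-case analysis based on the value of $\theta_D$ relative to $\theta_{\max}(\taue,\thetae)$.

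First I would treat the case $\theta_D \le \theta_{\max}(\taue,\thetae)$. Here Theorem~\ref{thm:angularSeparation} (necessity) together with Theorem~\ref{thm:sufficiency} (sufficiency) guarantees that the defender can reach the engagement surface point $(\taue,\thetae) \in \Se$ before being detected. The defender should verify that the radial condition $r \le \ro + \rt - \ra$ of Theorem~\ref{thm:sufficiency} holds; by Remark~\ref{rem:assumption}, this is automatic whenever the defender starts on the capture circle, which is the invariant maintained inductively. Once on the engagement surface, equation~\eqref{eq:tangentCondition} and Lemma~\ref{lem:arXiv} ensure the intruder is captured and, by construction, the capture occurs on the \emph{capture circle} at radius $\ro + 2\gamma\ra$. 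Since a win in the current game is secured and the defender ends at the minimal achievable radius consistent with winning, this choice is optimal: any deviation either forfeits the current capture or leaves the defender farther from the center, degrading $\Theta_{\rm g}$ for the next game.

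Next I would treat the case $\theta_D > \theta_{\max}(\taue,\thetae)$. By the necessity direction of Theorem~\ref{thm:angularSeparation}, the defender \emph{cannot} reach any engagement point in $\Se$ before detection, so the current intruder will breach regardless of the defender's action; the current game is lost. The only remaining objective is therefore to optimize the state for the next game, and by Corollary~\ref{corr:R_theta} this means minimizing $r$, i.e., driving to the target center. It remains to confirm this is \emph{achievable} in the available time: the intruder breaches only after traversing the partial-information distance, and the second inequality in Assumption~\eqref{eq:Asm}, $\nu\ro + \nicefrac{2\ra\nu^2}{(1-\nu^2)} \le \rt$, guarantees via the reasoning in Remark~\ref{rem:assumption} that the defender reaches the center before the next game begins. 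Arriving at $r=0$ makes $\Theta_{\rm g} = \pi$ for the next game (by Lemma~\ref{lem:captureFarthest}, since $0 \le \rt/\nu - \ro$), so the defender wins the next game with probability $1$, which is clearly the best recoverable outcome.

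The main obstacle I anticipate is making the optimality claim fully rigorous across the \emph{infinite-horizon} sequence rather than just a single game. A greedy per-game argument must be justified as globally optimal, which requires showing that the decomposition into a current-game payoff plus a next-state value is valid and that minimizing $r$ is monotonically optimal for all downstream games simultaneously. This follows because the per-game outcome depends on the starting state only through $r$, and $\Theta_{\rm g}(r)$ is monotone by Corollary~\ref{corr:R_theta}, so the greedy choice and the dynamic-programming-optimal choice coincide. Care is also needed to confirm that the strategy keeps the defender on the capture circle whenever it wins, so the sufficiency hypothesis $r \le \ro+\rt-\ra$ of Theorem~\ref{thm:sufficiency} is preserved as an invariant throughout the sequence; Remark~\ref{rem:assumption} supplies exactly this inductive step.
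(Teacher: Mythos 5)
Your overall architecture matches the paper's proof: the same two-case split on $\theta_D$ versus $\theta_{\max}(\taue,\thetae)$, feasibility of engagement via Theorems~\ref{thm:angularSeparation} and \ref{thm:sufficiency} with the capture-circle invariant from Remark~\ref{rem:assumption}, and feasibility of recentering via the second inequality in Assumption~\eqref{eq:Asm}. However, in the case $\theta_D \le \theta_{\max}(\taue,\thetae)$ you have a genuine gap. You dismiss deviations with the dichotomy that any alternative ``either forfeits the current capture or leaves the defender farther from the center,'' but the serious competitor is exactly a strategy in the first branch: ignore the current intruder, drive to the target center, concede the breach, and start the next game with $\Theta_{\rm g}=\pi$ (a guaranteed capture) instead of capturing the next intruder only with probability $p^*$ from the capture circle at radius $\ro+2\gamma\ra$. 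Your dichotomy names this option but never rules it out, and your final paragraph's appeal to monotonicity of $\Theta_{\rm g}(r)$ cannot do the job either: monotonicity says only that smaller $r$ is better for future games; it says nothing about whether sacrificing a \emph{guaranteed} current capture to obtain smaller $r$ is worthwhile. ``Greedy coincides with the dynamic-programming optimum'' does not follow from monotonicity alone --- the whole tension of the proposition is that engaging wins now but ends at $r=\ro+2\gamma\ra>0$, while conceding ends at $r=0$.

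The paper closes this branch with a quantitative comparison of expected captures, which you should make explicit. Over the next two games: engaging yields one guaranteed capture plus a capture of the following intruder with probability $p^*$, i.e., $1+p^*$ expected captures, and the defender's state afterward is the center with probability $1-p^*$ and the capture circle with probability $p^*$; conceding yields exactly one capture over the same two games (zero now, one guaranteed from the center) and leaves the defender on the capture circle with probability one. Thus engaging gives strictly more expected captures \emph{and} a terminal-state distribution that is no worse (the center being the better state by Corollary~\ref{corr:R_theta}), so the domination propagates game by game to any horizon, finite or infinite. With this two-game coupling inserted, your case analysis is complete; without it, the $\theta_D \le \theta_{\max}$ case rests on an unproved assertion.
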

\begin{proof}
When $\theta_D > \theta_{\max}(\taue,$ $\thetae)$, the defender is not able to reach to \textit{engagement surface}, and therefore, at the full information phase the Apollonius circle will have an intersection with the target.
This will lead to a breach. 
Therefore, the defender's attempt to engage with the intruder is futile and at the end of this engagement the defender might be at a non-zero radial location which will decrease its probability of winning the next game (Corollary~\ref{corr:R_theta}).
Thus, an \textit{optimal} strategy for the defender is to maximize the probability of the next game, which is achieved by being closest to the target center. 

On the other hand, when $\theta_D \le \theta_{\max}(\taue,$ $\thetae)$, then the defender can reach the engagement surface or it may choose not to engage with this intruder and rather position itself better for the next intruder. 
Given that the intruders are arriving on the TSR boundary with uniform probability, it can be verified (by considering the expected number of captures) that the \textit{optimal} choice is to engage with the current intruder since it is going to be a win with probability 1 (and some strictly positive probability for capturing the next).
\end{proof}


\section{Full Information Phase} \label{sec:fullInfo}
In the previous section we discussed the favourable configurations for starting the \textit{full information} game and derived the conditions on the defender's location to ensure that such a favorable configuration is achievable.
In this section we discuss the progression of the game once the engagement starts.

As soon as the intruder detects the defender, it can compute the instantaneous Apollonius circle and verify whether the configuration leads to a breach or capture. 
Recall from Lemma~\ref{lem:arXiv} that the intruder cannot get out of this Apollonius circle without being captured. 
Therefore, if breach is possible (i.e., the Apollonius circle has a nonempty intersection with the interior of the target), then the intruder moves toward a breaching point (see left image in Fig.~\ref{fig:engage_config}).
Otherwise, the intruder picks a strategy to draw the defender farthest away from the target center. 
In this way, the intruder ensures the radial distance of the defender is maximized before the next game starts and the defender has a lesser guarding angle (see Corollary~\ref{corr:R_theta}).
The following lemma provides the strategy for an intruder that gets captured.
\begin{lemma} \label{lem:xp}
The intruder goes to the point $\x_p$ in a straight line with maximum velocity, where
\begin{align*}
   \x_p = \xa(\taue) - \beta \ra\uvec({\thetae}) + \gamma \ra \uvec( \varphi) 
\end{align*}
Where $\varphi$ is equal to:
\begin{align} \label{eq:varphi}
\varphi = \tan^{-1}\bigg( \frac{\beta\ra \sin({\thetae})}{\beta\ra \cos({\thetae})-(\ro+\rt-\taue \nu)}\bigg) 
\end{align}
$\hfill \triangle$
\end{lemma}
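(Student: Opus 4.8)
The plan is to exploit the geometry of the instantaneous Apollonius circle at the moment the full information phase begins, together with Lemma~\ref{lem:arXiv}. Since the engagement starts on the engagement surface, the tangency condition \eqref{eq:tangentCondition} holds, so $\|\xc(\taue)\| = \ro + \gamma\ra$ and the Apollonius circle (center $\xc(\taue)$, radius $\gamma\ra$) is externally tangent to the target boundary. Hence its interior does not meet the target interior, no breach is possible, and the intruder is in the ``capture'' branch. By the discussion following Corollary~\ref{corr:R_theta}, the intruder's only remaining objective is to be captured as far from the target center as possible, so that the defender's radial distance at the start of the next game is maximized and its guarding angle minimized.

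First I would characterize the set of admissible capture locations. By Lemma~\ref{lem:arXiv}, regardless of the intruder's control the defender can force capture arbitrarily close to the initial Apollonius circle, and the intruder cannot exit that circle uncaptured; thus every capture location lies in the closed disk $\bar{B}(\xc(\taue), \gamma\ra)$. Consequently the largest achievable radial distance of the capture point is
\[
\max_{\x \in \bar{B}(\xc(\taue),\gamma\ra)} \|\x\| = \|\xc(\taue)\| + \gamma\ra = \ro + 2\gamma\ra,
\]
and, since the origin lies outside this disk, the maximum is attained at the unique point of the Apollonius circle on the ray from the origin through $\xc(\taue)$, namely $\x_p = \xc(\taue) + \gamma\ra\,\xc(\taue)/\|\xc(\taue)\|$. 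This is exactly the point on the \textit{capture circle} of radius $\ro + 2\gamma\ra$.

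Next I would put $\x_p$ in the stated closed form. Substituting $\xc(\taue) = \xa(\taue) - \beta\ra\uvec(\thetae)$ with $\xa(\taue) = (\ro+\rt-\taue\nu,\,0)^\intercal$ gives
\[
\xc(\taue) = \big(\ro+\rt-\taue\nu - \beta\ra\cos\thetae,\; -\beta\ra\sin\thetae\big)^\intercal,
\]
so the unit vector $\xc(\taue)/\|\xc(\taue)\|$ equals $\uvec(\varphi)$ with $\varphi = \operatorname{atan2}\big(-\beta\ra\sin\thetae,\; \ro+\rt-\taue\nu - \beta\ra\cos\thetae\big)$, which, because the first coordinate of $\xc(\taue)$ is positive under Assumption~\eqref{eq:Asm}, reduces to the principal-branch expression \eqref{eq:varphi}. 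This yields $\x_p = \xa(\taue) - \beta\ra\uvec(\thetae) + \gamma\ra\uvec(\varphi)$, as claimed; the quadrant reduction from $\operatorname{atan2}$ to $\tan^{-1}$ is the one piece of bookkeeping requiring care.

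Finally I would confirm that the straight-line, maximum-speed run to $\x_p$ is both feasible and optimal. Since $\x_p$ lies on the Apollonius circle, the defining property of that circle guarantees that both agents reach $\x_p$ at the same instant, so the intruder's straight-line strategy produces capture exactly at $\x_p$ rather than earlier. As no admissible capture location has radial distance exceeding $\ro+2\gamma\ra$, this strategy maximizes the defender's terminal radial position and is therefore optimal. The only genuine obstacle is the first step---invoking Lemma~\ref{lem:arXiv} to certify that the closed Apollonius disk is \emph{exactly} the reachable set of capture locations---after which the claim is an elementary ``farthest point of a disk from an external point'' computation.
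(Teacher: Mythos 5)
Your proposal is correct and follows essentially the same route as the paper's proof: compute $\xc(\taue) = \xa(\taue) - \beta\ra\uvec(\thetae)$, use the tangency condition \eqref{eq:tangentCondition} to write $\xc(\taue) = (\ro+\gamma\ra)\uvec(\varphi)$ with $\varphi$ as in \eqref{eq:varphi}, and identify $\x_p$ as the farthest point of the Apollonius circle from the target center. The only difference is that you make explicit what the paper leaves implicit---invoking Lemma~\ref{lem:arXiv} to certify the closed Apollonius disk as the set of admissible capture points, the principal-branch bookkeeping for $\tan^{-1}$, and the simultaneous-arrival feasibility of the straight-line run---all of which are sound additions rather than deviations.
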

\begin{proof}
The full information phase starts with intruder being at $\xa(\taue)=[\ro+\rt-\taue \nu, 0]^\intercal$ and the defender being at $\xd(\taue) = \xa(\taue) + \ra\uvec({\thetae})$.
Therefore, from \eqref{eq:AC}
\begin{align*}
    \xc(\taue) = \xa(\taue) - \beta \ra\uvec({\thetae}) = (\ro+\gamma\ra) \uvec({\varphi}),
\end{align*}
where $\varphi$ is given \eqref{eq:varphi}.
Thus, the farthest point on the Apollonius circle from the target center is at
\begin{align*}
    \x_p = \xc(\taue) +  \gamma \ra \uvec( \varphi) .
\end{align*}
By substituting the expression of $\xc(\taue)$ in the above equation, the lemma is proved.
\end{proof}



\section{Analysis of the Game} \label{sec:GameAnalysis}
The first game starts with defender being at the target center. 
 Due to Assumption \eqref{eq:Asm} and Lemma~\ref{lem:captureFarthest}, this leads to a capture of the first intruder with probability 1.
 The defender is on the \textit{capture circle} at the end of the first game. 
 Afterwards, at the end of each game it will either be on the capture circle or on the target center depending on the arrival angles of the intruders.
 The defender strategy (and the progression of the game) is described by the following algorithm.
       
    

\begin{algorithm} 
    \caption{Defender's Strategy} \label{algo}
    \label{euclid}
    \begin{algorithmic}[1] 
    \State Initialize $\xd \gets [0,0]^\intercal$, $N_{\rm capture}\gets 0$,  and $N$
    \For{$n = 1: N$}
    \State $\theta_A \sim {\mathcal U}(-\pi, \pi)$ \Comment{Uniform random arrival of intruder}
    \If{$\xd$ is at target center} \Comment{{\color{blue!60}Capture happens}}
       \State ${\x}_{\rm eng} \gets (\ro - \nicefrac{\ra}{(1+\nu)})\uvec(\theta_A)$
       \State Defender goes to ${\x}_{\rm eng}$ 
       \State $N_{\rm capture} \gets N_{\rm capture} +1$
       \State $\xd \gets (\ro +2\gamma\ra)\uvec(\theta_A)$ \Comment{$\xd$ after capture}
    \ElsIf{$\xd = (\ro+2\gamma\ra)\uvec(\theta_D)$}
    \If{$|\theta_D - \theta_A|\le \theta_{\max}$} \Comment{{\color{blue!60}Capture happens}}
        \State ${\x}_{\rm eng} \gets \xa(\taue)+\ra\uvec(\thetae)$
        \State Defender goes to ${\x}_{\rm eng}$ 
        \State $N_{\rm capture} \gets N_{\rm capture} +1$
        \State $\xd \gets \x_p$ from Lemma~\ref{lem:xp} \Comment{$\xd$ after capture}
       \Else \Comment{{\color{red!60} Breach happens}}
       \State Defender goes to the target center
       \State $\xd \gets [0,0]^\intercal$
       \EndIf
    \EndIf
    \EndFor
    \end{algorithmic}
\end{algorithm}

\subsection{Percentage of capture}
If the defender is located at the target center, then capture is guaranteed regardless of the angle $\theta_A$ the intruder appears at the TSR boundary. 
Otherwise (when it is on \textit{capture circle}), it can only capture the intruders that appear on the TSR boundary with an angular separation of $\theta_{\max}$ (Theorems~\ref{thm:angularSeparation}, \ref{thm:sufficiency}).
%
%
Given that the intruders appear on the TSR boundary independently with uniform probability, the capture probability is
\begin{align} \label{eq:p*}
   p^* \triangleq  \frac{\theta_{\max}}{\pi}.
\end{align}
\subsection{Percentage of Capture with Finite Arrival of Intruders}
As discussed before, the first intruder gets captured with probability 1.
 After the first capture, the defender is on the \textit{capture circle} and can capture the next randomly appearing intruder with probability  $p^*$.
If the next intruder is captured, then the defender will remain on the \textit{capture circle}; otherwise, it will be at the target center at the time the third game starts. 
This process is repeated as the intruders keep coming.

Every time the defender goes back to the center of the target (i.e., an intruder is able to breach) we call this a \textit{reset} of the game. 
The (random) sequence of capture in-between two \textit{resets} is called a \textit{travel}. 
The length of a \textit{travel} is the number of captures in that travel. 
Let the random variable $L_i$ denote the length of the $i$-th \textit{travel}.
Since each intruder appears independently with uniform probability at the TSR boundary, one may verify that $L_i$ is a geometric random variable with 
\begin{align} \label{eq:prob_Li}
  P(L_{i}=k) = (p^*)^ {k-1} (1- p^*),
\end{align}
for all positive integer $k$,  where $p^*$ is the quantity defined in {\eqref{eq:p*}}.
The probability of capturing a total of $n$ intruders in $m$($\le n$) {travels} is therefore \begin{align*}
  P\bigg(\sum_{i=1}^{m} L_{i}=n \bigg) = \binom{n-1}{m-1} (p^*)^ {n-m} (1- p^*)^{m},
\end{align*}
 which follows a negative binomial distribution.
 
 Let $N$ denote the total number of intruders that have arrived at the TSR boundary so far and $S_N$ denote the number of \textit{resets} at the end of the $N$-th game. 
 Therefore, we have 
\begin{align*}
    N \geq \sum_{i=1}^{S_N} L_{i} + S_N
\end{align*}
with probability almost surely.
Notice that the event $\{S_N \le m\}$ is equivalent to the event $\{\sum_{i=1}^{m+1} L_{i} +m \ge N\}$. 
Therefore, 
\begin{align}
 &P(S_N > m) = 1 -P(S_N \leq m)\nonumber\\
 &= 1- P\bigg(\sum_{i=1}^{m+1} L_{i} +m \ge N \bigg) \nonumber  \\
 &= P\bigg(\sum_{i=1}^{m+1} L_{i} +m \le N-1 \bigg) = \sum_{j= m+1}^{N-m-1} P\bigg(\sum_{i=1}^{m+1} L_{i} = j\bigg ) \nonumber \\
&= \sum_{j = m+1}^{N - m - 1} \binom{ j - 1}{m} (p^*)^ {j - m -1} (1- p^*)^{m+1}. \label{eq:snCDF}
\end{align}

At the end of the $N$-th game, a total of $S_N$ intruders have been able to breach, and therefore, the capture fraction is $\nicefrac{(N - S_N)}{N}$, which is a random variable due to the randomness of $S_N$.
From the (cumulative) distribution function of $S_N$ in \eqref{eq:snCDF}, we compute the expectation $\EX[S_N]$ at the end of the $N$-th game, from which we find the expected percentage of capture at the end of the $N$-th game to be
\begin{align} \label{eq:percentageN}
 {\rm percentage}(N) = \frac{N- \EX[S_N]}{N}\times 100,
\end{align}
for any finite $N$.

\subsection{Percentage of Capture with Infinite Arrival of Intruders}
After the $k$-th travel, the total number of captures is $\sum_{i=1}^k L_i$ and the total number of breach is $k$.
Therefore, the capture fraction $c(k)$ after the $k$-th travel is 
\begin{align*}
    c(k) &=   \frac{\sum_{i=1}^{k} L_i}{\sum_{i=1}^{k} L_i \ + k}.
\end{align*}
Notice that,   $c(k)$ is a random variable for each $k$ since $L_i$'s are random variables. 
For the case of infinite arrival of intruders the number of resets (i.e., $k$) also approaches infinity.
We take the limit $k\to \infty$ to obtain the asymptotic capture fraction $c_\infty$.
Therefore,
\begin{align*}
    c_\infty = \lim_{k \to \infty} c(k) =  \frac{\lim_{k\to\infty}\frac{1}{k}\sum_{i=1}^{k} L_i}{\lim_{k\to\infty}\frac{1}{k}\sum_{i=1}^{k} L_i \ + 1}.
\end{align*}
Although $c(k)$ is a random variable, due to the (strong) law of large numbers we obtain that $\frac{1}{k}\sum_{i=1}^{k} L_i \to \EX[L_i]$ almost surely as $k\to \infty$.
Recall that $L_i$ is a geometric random variable and from \eqref{eq:prob_Li}, we obtain that $\EX[L_i] = \nicefrac{1}{1-p^*}$. 
After substituting $\lim_{k\to\infty}\frac{1}{k}\sum_{i=1}^{k} L_i = \nicefrac{1}{1-p^*}$, we obtain
$
    c_\infty = \nicefrac{1}{(2-p^*)}.
$
Therefore, the asymptotic percentage of capture is
\begin{align} \label{eq:infinitePercentage}
    \mathrm{percentage}(\infty) = \frac{100}{~2-p^*}.
\end{align}
Using the expression of $p^*$ from \eqref{eq:p*}, we may also write $\mathrm{percentage}(\infty) = \nicefrac{100\pi}{(2\pi - \theta_{\max})}$.

\section{Simulation Results} \label{sec:Simu}

We simulate the game with the following parameters $\ro =5$, $\ra = 1,~ \rt = 10$, and $\nu = 0.8 $. 
Under this parametric choice we conducted $100$ random trials of the game.
In each trial we considered a sequence of $200$ incoming intruders that are uniform randomly generated on the TSR boundary. 
The percentage of capture for each trial is plotted in Fig.~\ref{fig:percentageTrial}.
\begin{figure}
    \centering
    \includegraphics[trim = 150 300 180 310, clip, width=0.75 \linewidth]{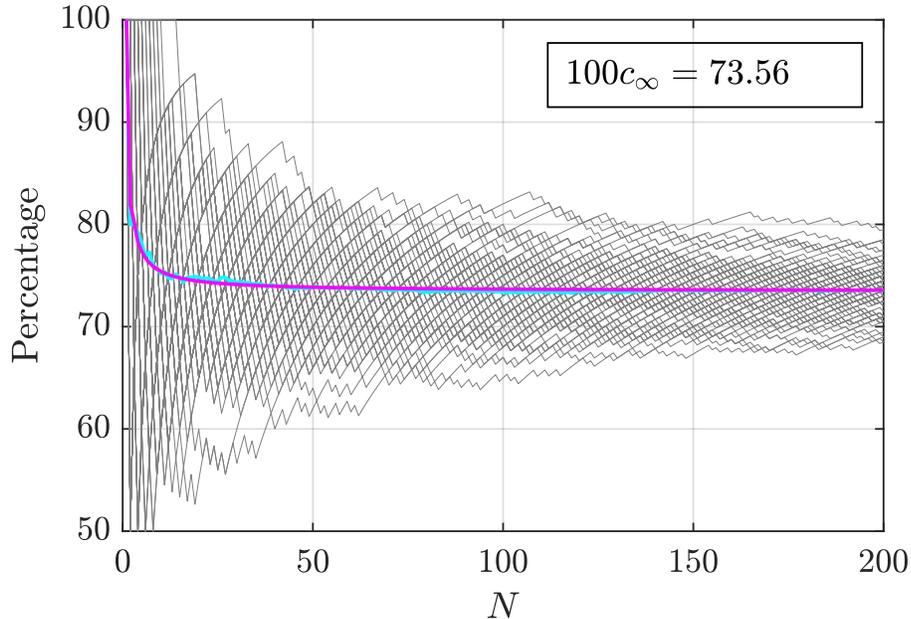}
   
    \caption{Percentage of Capture versus number of arrivals from 100 trials. Each trial is represented with a gray line. 
    Their empirical average is represented by the cyan line.
    The magenta line represents the theoretically predicted capture percentage in \eqref{eq:percentageN}.}
    \label{fig:percentageTrial}
\end{figure}
The abscissa in this figure denotes the number of arrivals ($N$) and the ordinate denotes the percentage of capture (i.e., $100(1-\nicefrac{S_N}N) $) for that number of arrivals.
We compute the empirical mean of the percentage of capture from these 100 random trials, and that is shown by the  cyan line in Fig.~\ref{fig:percentageTrial}.
To compare this simulation result with our theoretical analysis, we plot the expected percentage (i.e., ${\rm{percentage}}(N)$ from \eqref{eq:percentageN}) in the same figure using the 
magenta line. %
We observe that the empirical mean is very close to the theoretically predicted quantity.
In this plot we also report the value of the asymptotic capture percentage (i.e., $100c_\infty$), and we notice that the random trials and $\mathrm{percentage}(N)$ converge ($\sim$~exponentially) to $100c_\infty$ as $N$ increases.
{ A short simulation video can be accessed at \cite{cdcAnimation}. }

\subsection{Parameter variations}
In the next simulation, we vary the parameters $\nu, \ra$ and $\rt$ and plot the asymptotic and some of the finite time expected capture percentages in Figs.~\ref{fig:parametricVariation}-\ref{fig:parametricVariation2}.
\begin{figure*}
    \centering
    \includegraphics[trim = 160 285 225 290, clip, width=0.23\linewidth]{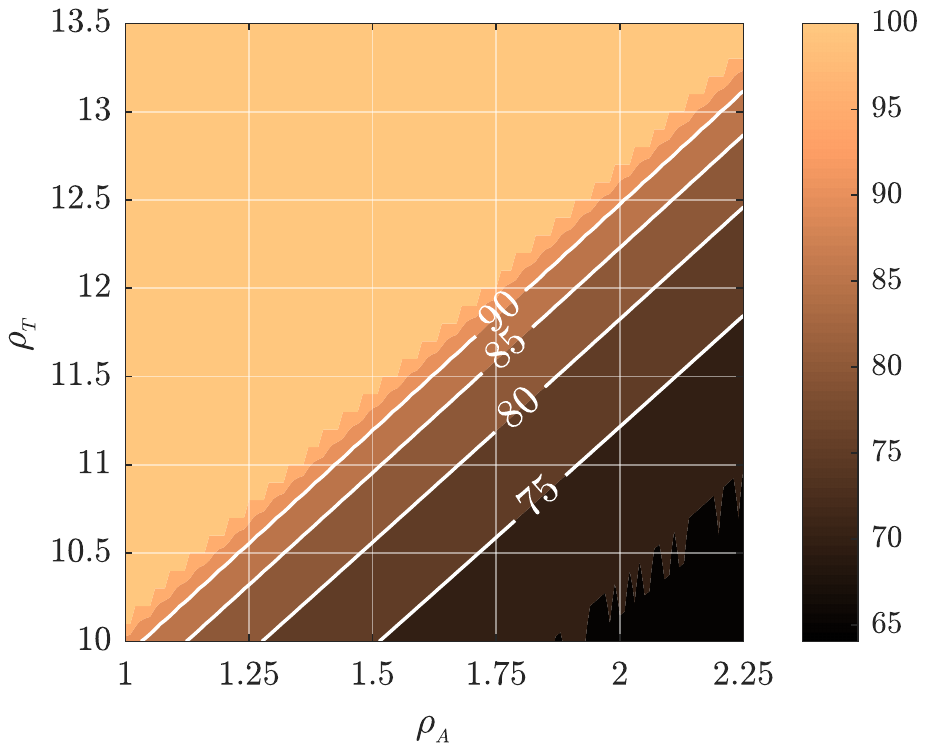}
    \includegraphics[trim = 160 285 225 290, clip, width=0.23\linewidth]{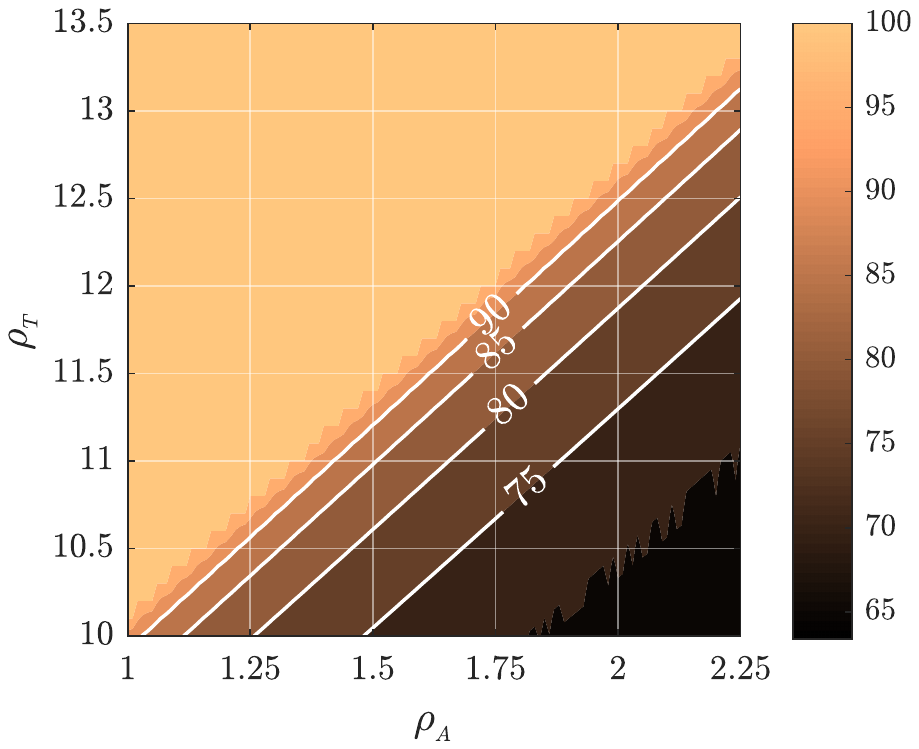}
    \includegraphics[trim = 160 285 225 290, clip, width=0.23\linewidth]{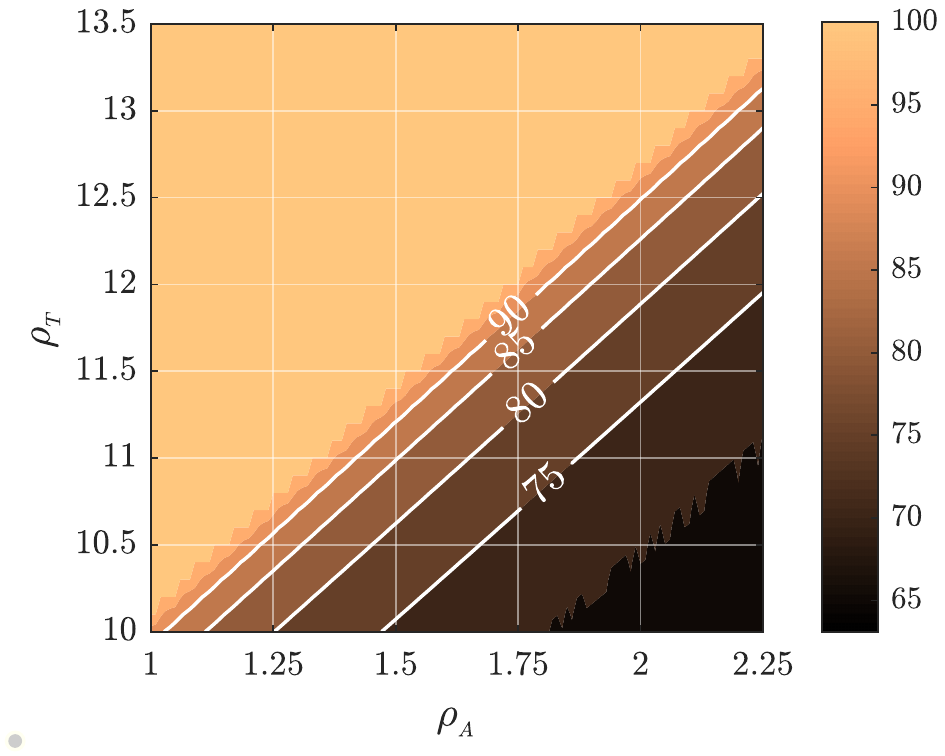}
    \includegraphics[trim = 160 285 185 290, clip, width=0.27\linewidth]{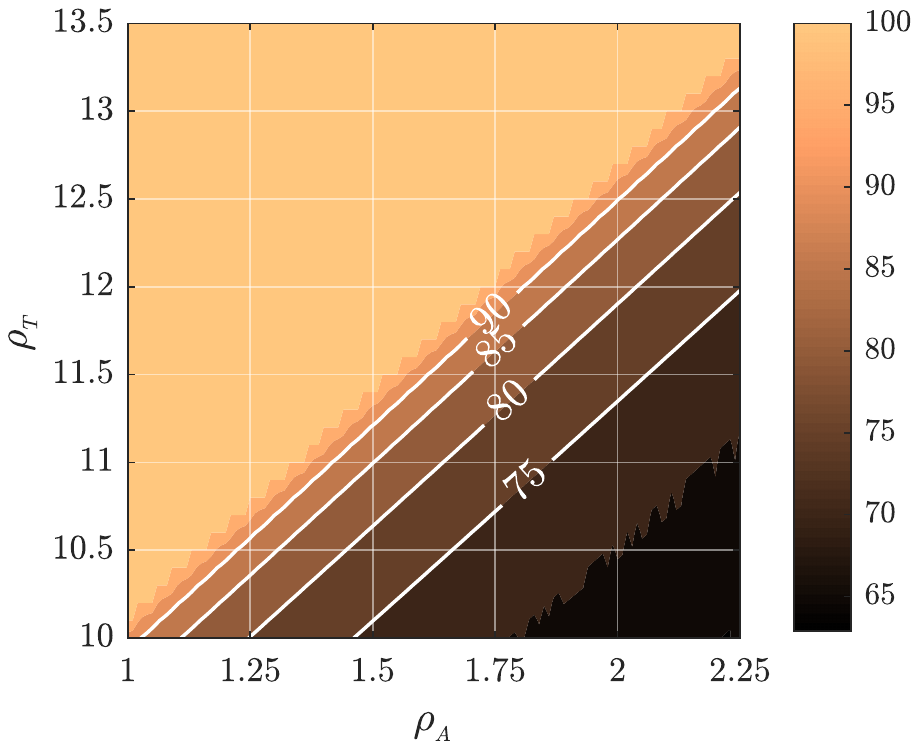}
    \caption{Expected percentage of capture when $\ra$ and $\rt$ are varied for a fixed $\nu = 0.75$. From left to right, we plot the percentage capture for $N = 20, 50, 100$ and $\infty$, respectively.}
    \label{fig:parametricVariation}
\end{figure*}

\begin{figure*}
    \centering
    \includegraphics[trim = 140 255 210 260, clip, width=0.23\linewidth]{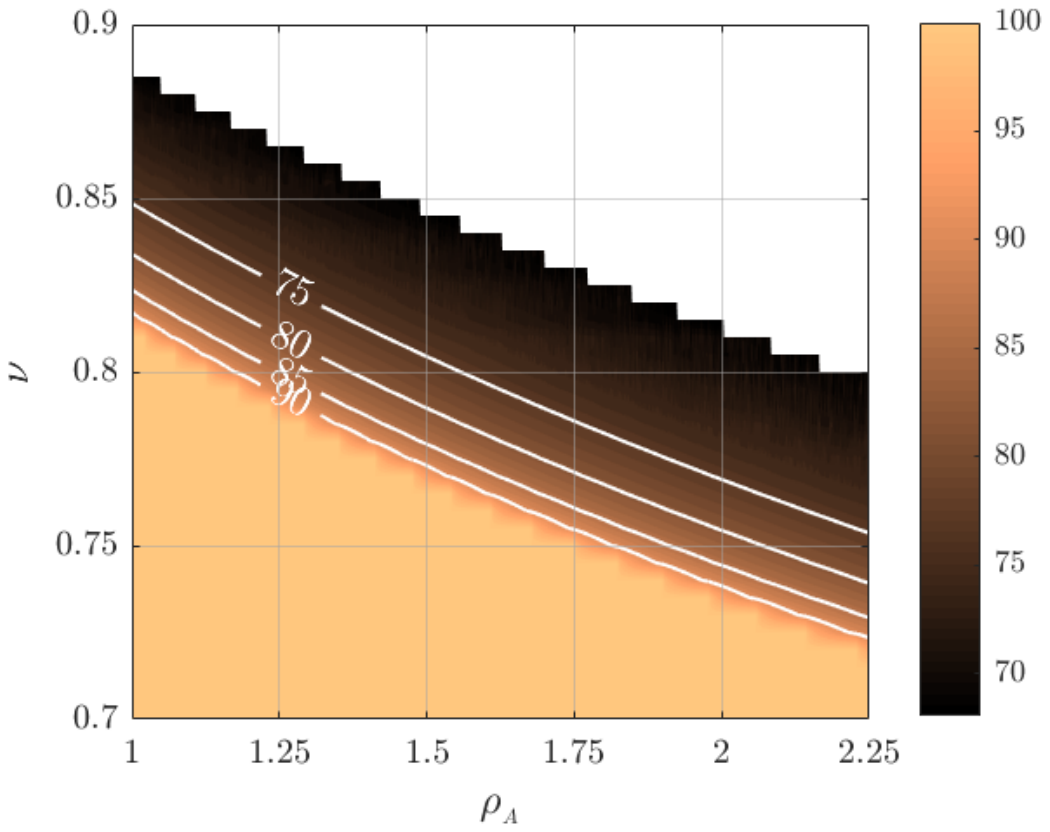}
     \includegraphics[trim = 140 255 210 260, clip, width=0.23\linewidth]{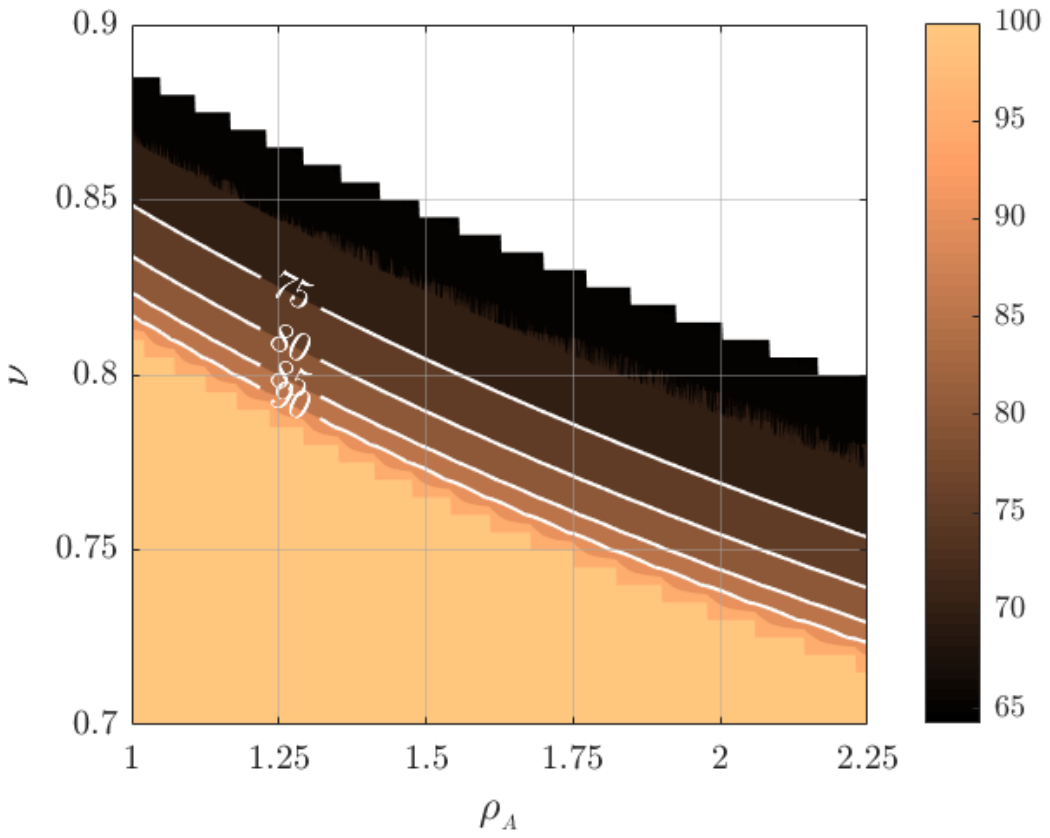}
    \includegraphics[trim = 140 255 210 260, clip, width=0.23\linewidth]{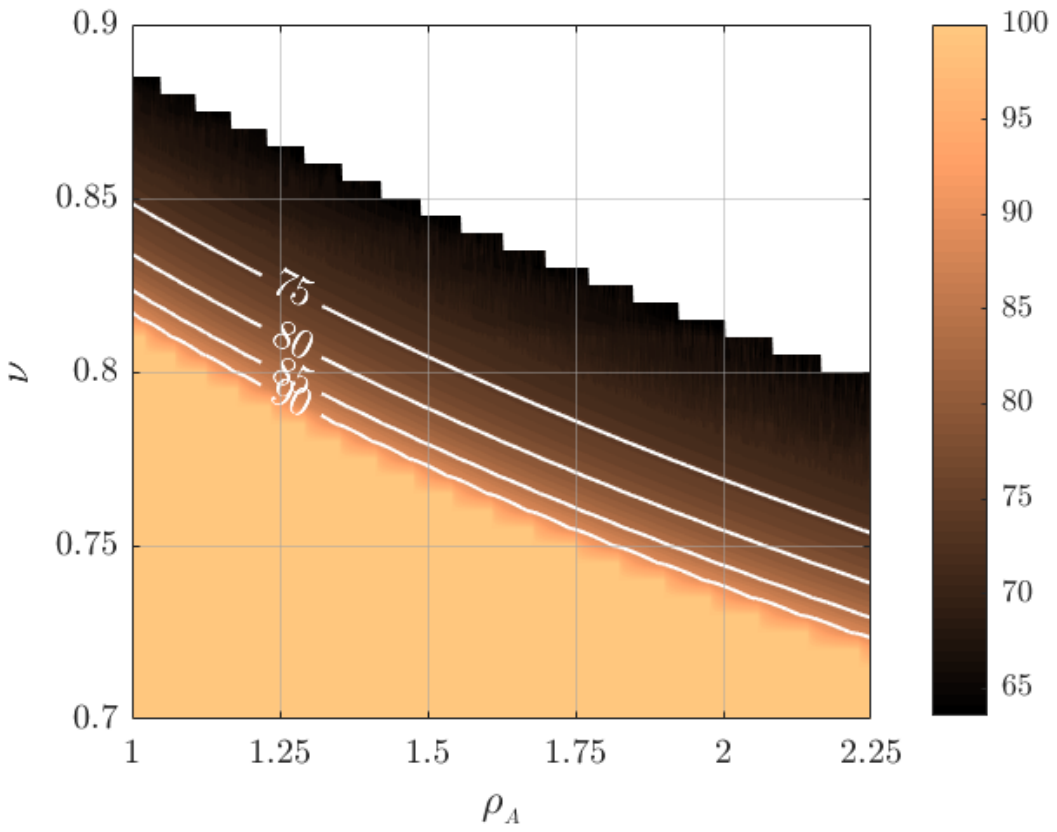}
     \includegraphics[trim = 140 255 170 260, clip, width=0.27\linewidth]{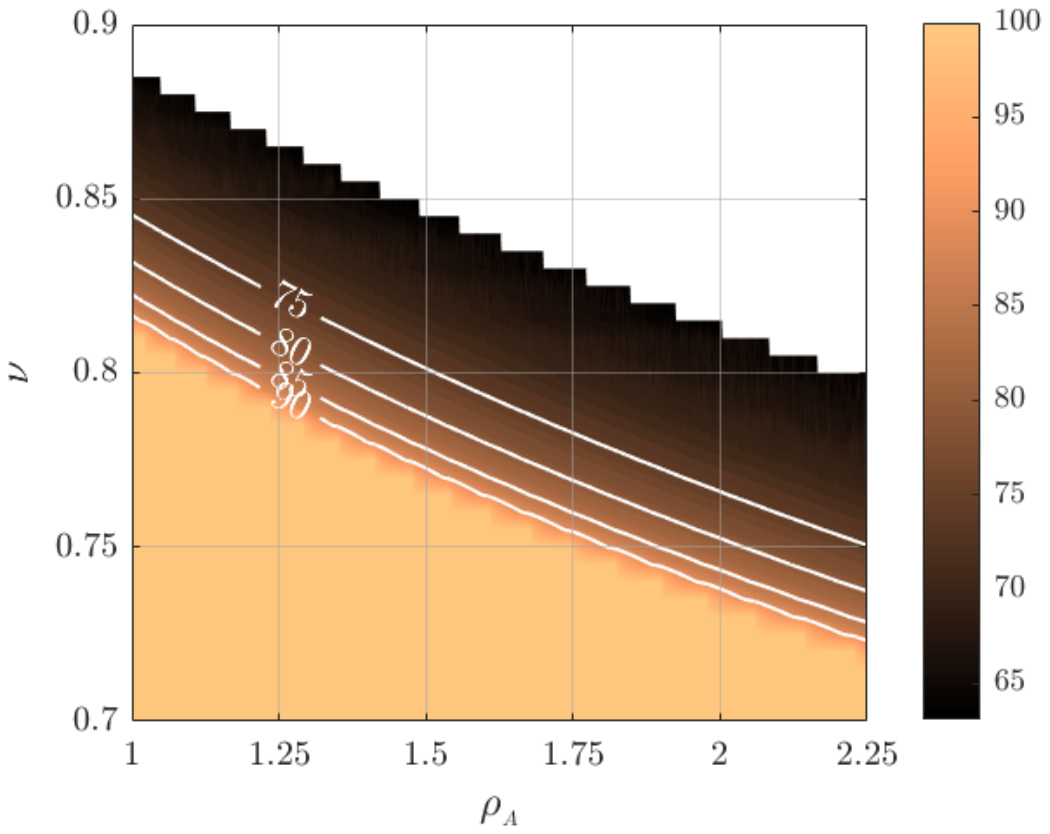}
    \caption{Expected percentage of capture when $\ra$ and $\nu$ are varied for a fixed $\rt = 12$. From left to right, we plot the percentage capture for $N = 5, 20, 50$ and $\infty$, respectively.}
    \label{fig:parametricVariation2}
\end{figure*}
In Fig.~\ref{fig:parametricVariation}, we fix $\nu = 0.75$ and vary the parameters $\ra$ and $\rt$. 
We use \eqref{eq:snCDF}-\eqref{eq:percentageN} to compute $\textrm{percentage}(N)$ for $N=20, 50$ and $100$. 
These are the first three subplots in Fig.~\ref{fig:parametricVariation}.
Then we use \eqref{eq:infinitePercentage} to compute the asymptotic percentage, which is shown in the right-most plot in Fig.~\ref{fig:parametricVariation}.
One of the unexpected observations is to note that many of the level sets (i.e., curves with constant percentages) appear to be linear in the $\ra-\rt$ plane, which is not apparent from the nonlinear relationship between these parameters and $\theta_{\max}$ (or equivalently $p^*$) in Theorem~\ref{thm:angularSeparation}.
Another immediate observation is that the separation between these level sets is \textit{not} proportional to the difference between the percentages. 
In fact, the separation among these lines monotonically decreases even though the percentage is increased at a regular interval. 

For this particular experiment, the slope of these lines appear to be approximately $2.5$. 
An immediate conclusion from this linear behavior is that for each unit of change in $\ra$, we need $2.5$ units of change in $\rt$ to maintain the same expected capture percentage (since the slope is 2.5). 
An interesting future direction would be to investigate these lines and find the relationship between their properties (e.g., slope and intercept) and the problem parameters ($\nu, \ro$ etc.).

Another observation from Fig.~\ref{fig:parametricVariation} is that the plots do not change much as we move from $N=20$ to $N= \infty$. 
This is consistent with our observation in Fig.~\ref{fig:percentageTrial} where the expected capture percentage (the magenta line) quickly converges to the asymptotic capture percentage. 
While Fig.~\ref{fig:percentageTrial} showcases this behavior for one particular parameter setting, Fig.~\ref{fig:parametricVariation} demonstrates the same over a range of parameters.

Next we fixed $\rt$ while varying $\nu$ and $\ra$ to compute the expected capture percentages. 
This is plotted in Fig.~\ref{fig:parametricVariation2}.
When the condition in \eqref{eq:Asm} is not satisfied, we do not compute the capture percentages and hence some parts (top-right) of the plots in Fig.~\ref{fig:parametricVariation2} are empty. 
Some level sets of the capture percentages are also plotted in these plots.
Similar to the plots in Fig.~\ref{fig:parametricVariation}, these plots also do not change a lot as we vary $N$.

\section{Conclusion} \label{sec:Conclusion}
In this paper, we formulated a target defense game against a sequence of incoming intruders. 
At any time only one intruder attempts to breach the target, which decomposes the entire game into a sequence of 1-vs-1 games.
The terminal configuration of the current game becomes the initial configuration for the next game. 
Based on the available sensor information, each game is divided into two phases and the strategies for both agents are derived for these two phases. 
We define the concept of \textit{engagement surface} and \textit{capture circle} to construct the strategies for the agents. 
We analyzed the entire game for both finite and infinite sequences of intruder arrivals and analytically computed the expected percentage of capture for both the cases. 
Numerical experiments demonstrate further interesting and unexpected characteristics in the levels sets (see discussion on Fig.~\ref{fig:parametricVariation}).

A natural extension of this work would be to consider different arrival patterns for the intruders (e.g., periodic arrival, non-uniform probability of arrival locations, multiple arrivals). Furthermore, one may also consider a heterogeneous team for the intruders where different intruders may have different speed and sensing capabilities. 
{{Another possible extension is to consider a defender with its own sensing region.}}

\bibliographystyle{IEEEtran}
\bibliography{reference}

\end{document}